\newcommand{\scr}{\scriptscriptstyle }
\newtheorem{theorem}{Theorem}
\newtheorem{lemma}{Lemma}
\theoremstyle{definition}
\newtheorem{definition}{Definition}
\newtheorem{remark}{Remark}
\newtheorem{assumption}{Assumption}
\newtheorem{problem}{Problem}
\begin{document}
	
	\title{Communication-based Decentralized Cooperative Object Transportation Using Nonlinear Model Predictive Control}
	
	\author{Christos K. Verginis, Alexandros Nikou and Dimos V. Dimarogonas
		\thanks{The authors are with the ACCESS Linnaeus Center, School of Electrical Engineering, KTH Royal Institute of Technology, SE-100 44, Stockholm, Sweden and with the KTH Center for Autonomous Systems. Email: {\tt\small \{cverginis, anikou, dimos\}@kth.se}. This work was supported by the H2020 ERC Starting Grant BUCOPHSYS, the EU H2020 Co4Robots Project, the EU H2020 AEROWORKS project, the swedish Foundation for Strategic Research (SSF), the Swedish Research Council (VR) and the Knut och Alice Wallenberg Foundation (KAW).}}
	
	
	
	\maketitle
	
	\begin{abstract}
This paper addresses the problem of cooperative transportation of an object rigidly grasped by $N$ robotic agents. We propose a Nonlinear Model Predictive Control (NMPC) scheme that guarantees the navigation of the object to a desired pose in a bounded workspace with obstacles, while complying with certain input saturations of the agents. The control scheme is based on inter-agent communication and is decentralized in the sense that each agent calculates its own control signal. Moreover, the proposed methodology ensures that the agents do not collide with each other or with the workspace obstacles as well as that they do not pass through singular configurations. The feasibility and convergence analysis of the NMPC are explicitly provided. Finally, simulation results illustrate the validity and efficiency of the proposed method.
	\end{abstract}
	
\begin{IEEEkeywords}
Multi-Agent Systems, Cooperative control, Decentralized Control, Manipulation, Object Transportation, Nonlinear Model Predictive Control, Collision Avoidance, Singularity Avoidance. Leader-follower Control
\end{IEEEkeywords}
	
	%
	\IEEEpeerreviewmaketitle
	
	\section{Introduction}

Over the last years, multi-agent systems have gained a significant amount of attention, due to the advantages they offer with respect to single-agent setups. Robotic manipulation is a field where the multi-agent formulation can play a critical role, since a single robot might not be able to perform manipulation tasks that involve heavy payloads and challenging maneuvers.

Regarding cooperative manipulation, the literature is rich with works that employ control architectures where the robotic agents communicate and share information with each other as well as completely decentralized schemes, where each agent uses only local information or observers \cite{schneider1992object,liu1998decentralized,zribi1992adaptive,khatib1996decentralized,caccavale2000task,gudino2004control}. The most common methodology used in the related literature constitutes of impedance and force/motion control \cite{schneider1992object,caccavale2008six,heck2013internal,erhart2013adaptive,szewczyk2002planning,tsiamis2015cooperative,ficuciello2014cartesian,ponce2016cooperative,gueaieb2007robust}. Most of the aforementioned works employ force/torque sensors to acquire knowledge of the manipulator-object contact forces/torques, which, however, may result to performance decline due to sensor noise. 

Moreover, in manipulation tasks, such as pose/force or trajectory tracking, collision with obstacles of the environment has been dealt with only by exploiting the potential extra degrees of freedom of over-actuated agents, or by using potential field-based algorithms. These methodologies, however, may suffer from local minima, even in single-agent cases, and in many cases they yield high control inputs that do not comply with the saturation of actual motor inputs, especially close to collision configurations. In our previous works, \cite{verginis2017distributed,verginisMastellaro}, we considered the problem of trajectory tracking for decentralized robust cooperative manipulation, without taking into account singularity- or collision avoidance.

Another important property that concerns robotic manipulators is the singularities of the Jacobian matrix, which maps the joint velocities of the agent to a $6$D vector of generalized velocities. Such \textit{singular} \textit{kinematic} configurations, that indicate directions towards which the agent cannot move, must be always avoided, especially when dealing with task-space control in the end-effector \cite{siciliano}. In the same vein, \textit{representation} singularities can also occur in the mapping from coordinate rates to angular velocities of a rigid body. 

The main contribution of this work is to provide decentralized feedback control laws that guarantee the cooperative manipulation of an object in a bounded workspace with obstacles. In particular, given $N$ agents that rigidly grasp an object, we design decentralized control inputs for the navigation of the object to a final pose, while avoiding inter-agent collisions as well as collisions with obstacles. Moreover, we take into account constraints that emanate from control input saturation as well kinematic and representation singularities. The proposed approach to address this problem is the repeated solution of a Finite-Horizon Open-loop Optimal Control Problem (FHOCP) of each agent, by assigning a set of priorities. Control approaches using this strategy are referred to as Nonlinear Model Predictive Control (NMPC) (see e.g. \cite{Mayne2000789, morrari_npmpc, cannon_2001_nmpc,  frank_2003_nmpc_bible, frank_1998_quasi_infinite, frank_2003_towards_sampled-data-nmpc, grune_2011_nonlinear_mpc, camacho_2007_nmpc, borrelli_2013_nmpc, fontes_2001_nmpc_stability, alex_chris_med_2017, alex_cdc_2017_timed_abstractions, alex_ECC_2018}). A decentralized NMPC scheme has been considered in our submitted work \cite{alex_IJC_2017}, which concerns multi-agent navigation with inter-agent connectivity maintenance and collision avoidance.

In our previous work \cite{alex_chris_med_2017}, a similar problem was considered in a centralized way. However, the computation burden is high, due to the fact that the number of states in the centralized case increases proportionally with the number of agents, causing exponential increase in the computational time and memory. In this work, we decouple the dynamic model among the object and the agents by using certain load-sharing coefficients and consider a communication-based leader agent formulation, where a leader agent determines the followed trajectory for the object and the follower agents comply with it through appropriate constraints. To the best of the authors' knowledge, this is the first time that the problem of decentralized object transportation with singularity, obstacle and collision avoidance is addressed. 

The remainder of the paper is structured as follows. Section \ref{sec:preliminaries} provides preliminary background. The system dynamics and the formal problem statement are given in Section \ref{sec:Problem-Formulation}. Section \ref{sec:solution} discusses the technical details of the solution and Section \ref{sec:simulation_results} is devoted to a simulation example. Conclusions and future work are discussed in Section \ref{sec:conclusions}.

\section{Notation and Preliminaries} \label{sec:preliminaries}

The set of positive integers is denoted as $\mathbb{N}$ and the real $n$-coordinate space, with $n\in\mathbb{N}$, as $\mathbb{R}^n$;
$\mathbb{R}^n_{\geq 0}$ and $\mathbb{R}^n_{> 0}$ are the sets of real $n$-vectors with all elements nonnegative and positive, respectively. The notation $\mathbb{R}^{n\times n}_{\geq 0}$ and $\mathbb{R}^{n\times n}_{> 0}$, with $n\in\mathbb{N}$, stands for positive semi-definite and positive definite matrices, respectively. The notation $\|x\|$ is used for the Euclidean norm of a vector $x \in \mathbb{R}^n$ and $\|A\| = \max \{ \|A x \|: \|x\| = 1 \}$ for the induced norm of a matrix $A \in \mathbb{R}^{m \times n}$. Given a real symmetric matrix $A$, $\lambda_{\text{min}}(A)$
	and $\lambda_{\text{max}}(A)$ denote the minimum and the maximum absolute value of eigenvalues of $A$, respectively.
	Its minimum and maximum singular values are denoted by $\sigma_{\text{min}}(A)$ and $\sigma_{\text{max}}(A)$ respectively; $I_n \in \mathbb{R}^{n \times n}$ and $0_{m \times n} \in \mathbb{R}^{m \times n}$ are the identity matrix and the $m \times n$ matrix with all entries zeros,
	respectively. Given a set $S$, we denote by $|S|$ its cardinality and by $S^N = S \times \dots \times S$ its $N$-fold Cartesian product.

The vector connecting the origins of coordinate frames $\{A\}$ and $\{B$\} expressed in frame $\{C\}$ coordinates in $3$-D space is denoted as $p^{\scriptscriptstyle C}_{\scriptscriptstyle B/A} = [x_{\scriptscriptstyle B/A}, y_{\scriptscriptstyle B/A}, z_{\scriptscriptstyle B/A}]^\top \in\mathbb{R}^3$. Given a vector $a\in\mathbb{R}^3, S(a)$ is the skew-symmetric matrix defined according to $S(a)b = a\times b$. We further denote by $\eta_{\scriptscriptstyle A/B} = [\phi_{\scriptscriptstyle A/B}, \theta_{\scriptscriptstyle A/B}, \psi_{\scriptscriptstyle A/B}]^\top\in\mathbb{T}^3\subseteq \mathbb{R}^3$ the $x$-$y$-$z$ Euler angles representing the orientation of frame $\{A\}$ with respect to frame $\{B\}$, where $\mathbb{T}\coloneqq (-\pi,\pi)\times(-\tfrac{\pi}{2},\tfrac{\pi}{2})\times(-\pi,\pi)$; Moreover, $R^{\scriptscriptstyle B}_{\scriptscriptstyle A}\in SO(3)$ is the rotation matrix associated with the same orientation and $SO(3)$ is the $3$-D rotation group. The angular velocity of frame $\{B\}$ with respect to $\{A\}$, expressed in frame $\{C\}$ coordinates, is  denoted as $\omega^{\scriptscriptstyle C}_{{\scriptscriptstyle B/A}}\in \mathbb{R}^{3}$ and it holds that $\dot{R}^{\scriptscriptstyle B}_{\scriptscriptstyle A} = S(\omega^{\scriptscriptstyle A}_{\scriptscriptstyle B/A})R^{\scriptscriptstyle B}_{\scriptscriptstyle A}$ \cite{siciliano}. Define also the sets $\mathbb{M} = \mathbb{R}^3\times\mathbb{T}^3$, $\mathcal{N} = \{1,\dots,N\}$. We define also the set $$\mathcal{O}_z \coloneqq \mathcal{O}(c_z, \beta_{1, z},\beta_{2, z}, \beta_{3, z}) \notag  = \left\{ p \in \mathbb{R}^3 : (p-c_z)^\top P (p-c_z) \le 1 \right\},$$ as the set of an \emph{ellipsoid} in 3D, where $c_z \in \mathbb{R}^3$ is the center of the ellipsoid, $\beta_{1, z}, \beta_{2, z}, \beta_{3, z} \in \mathbb{R}_{> 0}$ the lengths of its three semi-axes and $z \ge 1$ is an index term. The eigenvectors of matrix $P \in \mathbb{R}^{3 \times 3}$ define the principal axes of the ellipsoid, and the eigenvalues of $P$ are: $\beta_{1, z}^{-2}, \beta_{2, z}^{-2}$ and $\beta_{3, z}^{-2}$. For notational brevity, when a coordinate frame corresponds to an inertial frame of reference $\left\{I\right\}$, we will omit its explicit notation (e.g., $p_{\scriptscriptstyle B} = p^{\scriptscriptstyle I}_{\scriptscriptstyle B/I}, \omega_{\scriptscriptstyle B} = \omega^{\scriptscriptstyle I}_{\scriptscriptstyle B/I}, R_{\scriptscriptstyle A} = R^{\scriptscriptstyle I}_{\scriptscriptstyle A} $). Finally, all vector and matrix differentiations will be with respect to an inertial frame $\{I\}$, unless otherwise stated.

\begin{definition} \label{def:k_class} \cite{khalil_nonlinear_systems} A continuous function $\xi : [0, a) \to \mathbb{R}_{\ge 0} $ belongs to \emph{class $\mathcal{K}$} if it is strictly increasing and $\xi(0) = 0$. If $a = \infty$ and $\lim\limits_{r \to \infty} \xi(r) = \infty$, then function $\xi$ belongs to class $\mathcal{K}_{\infty}$.
\end{definition}

\begin{lemma} \label{lemma:barbalat} \cite{michalska_1994_barbalat} Let $f$ be a continuous, positive-definite function, and $x$ be an absolutely continuous function in $\mathbb{R}$. Suppose that: $\|x(\cdot)\| < \infty$, $\|\dot{x}(\cdot)\| < \infty$ and $\lim\limits_{t \to \infty} \int\limits_0^t f\big(x(s)\big)ds < \infty$. Then, it holds: $\lim\limits_{t \to \infty} \|x(t)\| = 0$.
\end{lemma}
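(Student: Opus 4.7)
The plan is to reduce the statement to the classical Barbalat lemma (uniform continuity plus integrability of the integrand implies the integrand vanishes at infinity), and then exploit the positive-definiteness of $f$ to pass from $f(x(t))\to 0$ to $\|x(t)\|\to 0$. Let $g(t) \coloneqq f(x(t))$. The hypothesis $\lim_{t\to\infty}\int_0^t f(x(s))\,ds < \infty$ is exactly the statement that $g \in L^1([0,\infty))$ with finite limit of its primitive, so what remains is to prove that $g$ is uniformly continuous on $[0,\infty)$.

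First I would show that $x$ itself is uniformly continuous: since $x$ is absolutely continuous and $\|\dot x(\cdot)\|$ is essentially bounded, for any $t_1 < t_2$ one has $\|x(t_2)-x(t_1)\| \le \int_{t_1}^{t_2} \|\dot x(s)\|\,ds \le \sup_s\|\dot x(s)\|\,|t_2-t_1|$, i.e.\ $x$ is Lipschitz and therefore uniformly continuous. Next, since $\|x(\cdot)\|$ is bounded, the trajectory $x([0,\infty))$ is contained in a compact ball $K$; the continuous function $f$ is uniformly continuous on $K$, and composing a uniformly continuous function with a uniformly continuous one yields a uniformly continuous map, so $g = f\circ x$ is uniformly continuous on $[0,\infty)$.

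Having established uniform continuity of $g$ together with the existence of $\lim_{t\to\infty}\int_0^t g(s)\,ds$, the classical Barbalat lemma applies and gives $\lim_{t\to\infty} g(t) = \lim_{t\to\infty} f(x(t)) = 0$. Finally, I would convert this into $\|x(t)\|\to 0$ by contradiction: if not, there exist $\varepsilon>0$ and a sequence $t_n\to\infty$ with $\|x(t_n)\|\ge \varepsilon$. Boundedness of $x$ allows extracting a subsequence $x(t_{n_k})\to x^\star$ with $\|x^\star\|\ge\varepsilon$, so $x^\star\neq 0$. Positive-definiteness of $f$ then forces $f(x^\star)>0$, and continuity of $f$ yields $f(x(t_{n_k}))\to f(x^\star)>0$, contradicting $g(t)\to 0$.

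The only nontrivial step is the uniform-continuity argument for $g$; everything else is standard. If one wanted to avoid invoking the named Barbalat lemma as a black box, one could argue it directly by showing that any hypothetical $\delta>0$ and sequence $t_n\to\infty$ with $g(t_n)\ge\delta$ would, by uniform continuity, produce intervals of uniformly positive length on which $g\ge\delta/2$, contradicting the convergence of $\int_0^\infty g(s)\,ds$; but since Barbalat's lemma is explicitly cited via \cite{michalska_1994_barbalat}, invoking it directly is the cleanest route.
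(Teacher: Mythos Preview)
Your argument is correct and follows the standard route for this Barbalat-type result. Note, however, that the paper does not actually prove Lemma~\ref{lemma:barbalat}: it is stated in the preliminaries with a citation to \cite{michalska_1994_barbalat} and used as a known tool, so there is no in-paper proof to compare against. Your reduction to the classical Barbalat lemma via uniform continuity of $g=f\circ x$ (Lipschitz $x$ from bounded $\dot x$, uniform continuity of $f$ on the compact range of $x$), followed by the compactness/positive-definiteness contradiction to upgrade $f(x(t))\to 0$ to $\|x(t)\|\to 0$, is exactly the standard proof one finds in the cited literature.
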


\section{Problem Formulation}
\label{sec:Problem-Formulation}
The formulation we adopt in this paper follows from the one from our previous work \cite{alex_chris_med_2017}. Consider a bounded and convex workspace $\mathcal{W} \coloneqq \mathcal{B}(0_{3 \times 1},r_w) \subseteq \mathbb{R}^{3}$, where $r_w > 0$ is the workspace radius, consisting of $N$ robotic agents rigidly grasping an object, as shown in Fig. \ref{fig:Two-robotic-arms}, and $Z$ static obstacles described by the ellipsoids $\mathcal{O}_z, z \in \mathcal{Z} \coloneqq \{1, \dots, Z\}$. The free space is denoted as $\mathcal{W}_{\text{free}} \coloneqq \mathcal{W}\backslash \bigcup_{z \in \mathcal{Z}} \mathcal{O}_z$. The agents are considered to be fully actuated and they consist of a base that is able to move around the workspace (e.g., mobile or aerial vehicle) and a robotic arm.
The reference frames corresponding to the $i$-th end-effector and the object's center of
mass are denoted with $\left\{ E_{i}\right\} $ and $\left\{ O\right\} $,
respectively, whereas $\left\{ I\right\} $ corresponds to an inertial
reference frame. The rigidity of the grasps implies that the agents can exert any forces/torques along every direction to the object. 
We consider that each agent $i$ knows the position and velocity only of its own state as well as its own and the object's geometric parameters. Moreover, no interaction force/torque measurements or on-line communication is required.

\begin{figure}
	\centering
	\includegraphics[width = 0.60\textwidth]{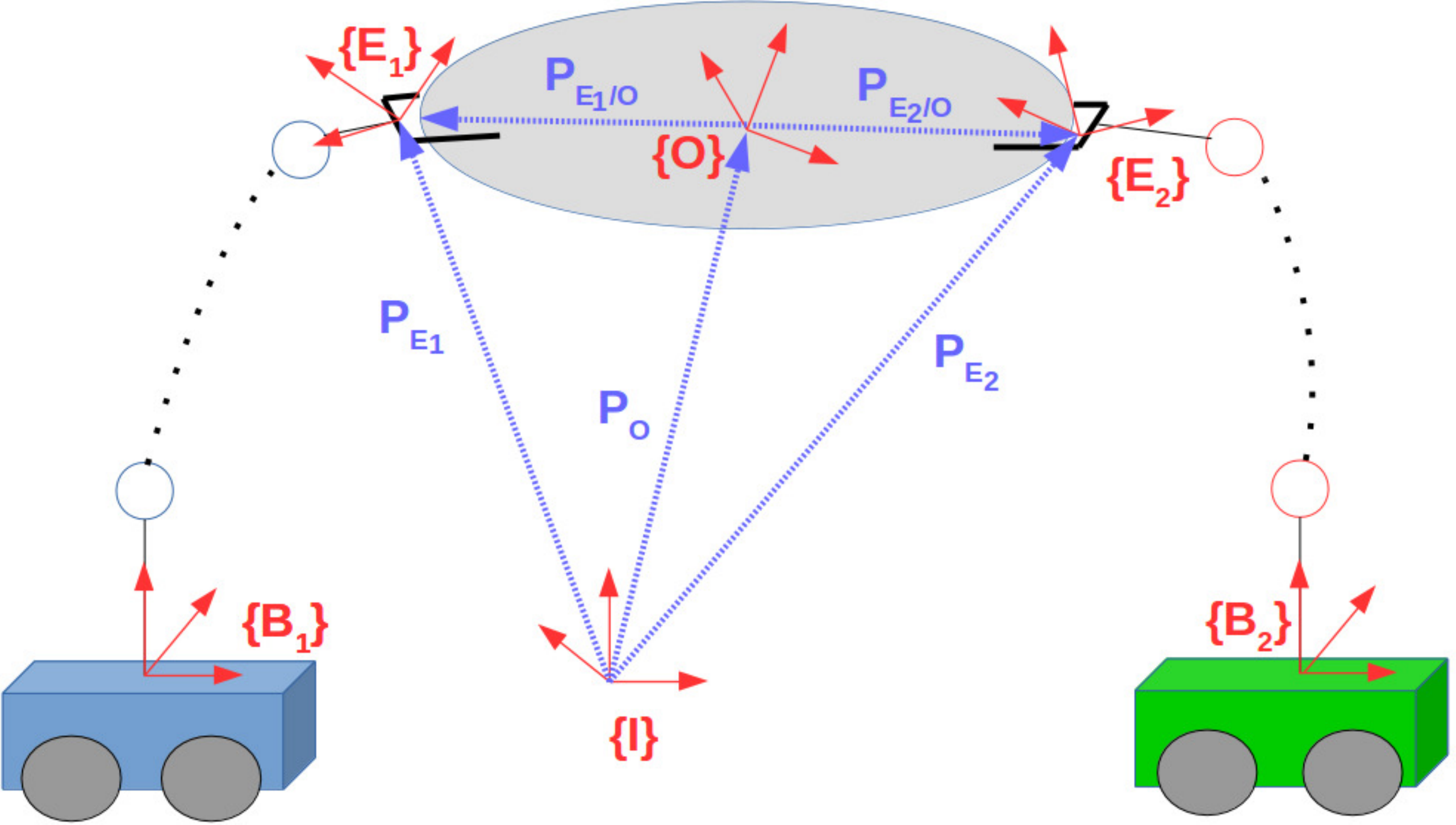}
	\caption{Two robotic arms rigidly grasping an object with the corresponding frames.\label{fig:Two-robotic-arms}}
\end{figure} 

\subsection{System model}
\label{subsec:system_model}

\subsubsection{Robotic Agents} \label{subsubsec: agent dynamics}

We denote by $q_i\in\mathbb{R}^{n_i}$ the joint space variables of agent $i\in\mathcal{N}$, with $n_i = n_{\alpha_i} + 6$, $q_i = [p^\top_{\scriptscriptstyle B_i},\eta^\top_{\scriptscriptstyle B_i}, \alpha^\top_i]^\top$, where $p_{\scriptscriptstyle B_i} = [x_{\scriptscriptstyle B_i}, y_{\scriptscriptstyle B_i}, z_{\scriptscriptstyle B_i}]^\top\in\mathbb{R}^{3},\eta_{\scriptscriptstyle B_i} = [\phi_{\scriptscriptstyle B_i}, \theta_{\scriptscriptstyle B_i}, \psi_{\scriptscriptstyle B_i}]^\top\in\mathbb{T}$ is the position and Euler-angle orientation of the agent's base, and $\alpha_i\in\mathbb{R}^{n_{\alpha_i}},n_{\alpha_i}>0$, are the degrees of freedom of the robotic arm. The overall joint space configuration vector is denoted as $q \coloneqq [q^\top_1,\dots,q^\top_N]^\top\in\mathbb{R}^n$, with $n \coloneqq \sum_{i\in\mathcal{N}}n_i$. 

The linear and angular velocities of the agents' base are described by the functions $v_{\scriptscriptstyle L,B_i}:\mathbb{R}^{n_i}\to\mathbb{R}^3$, with  $v_{\scriptscriptstyle L,B_i}(\dot{q}_i) \coloneqq \dot{p}_{\scriptscriptstyle B_i}$ and $\omega_{\scriptscriptstyle B_i}:\mathbb{R}^{2n_i}\to\mathbb{R}^3$, with $\omega_{\scriptscriptstyle B_i}(q_i,\dot{q}_i) \coloneqq J_{\scriptscriptstyle B_i}(\eta_{\scriptscriptstyle B_i})\dot{\eta}_{\scriptscriptstyle B_i}$, where $J_{\scriptscriptstyle B_i}:\mathbb{T}\to\mathbb{R}^{3\times3}$ is the representation Jacobian matrix, with: $$J_{\scriptscriptstyle B_i}(\eta_{\scriptscriptstyle B_i}) = \begin{bmatrix}
1 & 0 & \sin(\theta_{\scriptscriptstyle B_i}) \\
0 & \cos(\phi_{\scriptscriptstyle B_i}) & -\cos(\theta_{\scriptscriptstyle B_i})\sin(\phi_{\scriptscriptstyle B_i}) \\
0 & \sin(\phi_{\scriptscriptstyle B_i}) & \cos(\theta_{\scriptscriptstyle B_i})\cos(\phi_{\scriptscriptstyle B_i})
\end{bmatrix}.$$ We consider that each agent $i\in\mathcal{N}$ has access to its own state $q_i, \dot{q}_i$, and can compute, therefore the terms $v_{\scriptscriptstyle L,B_i}(\dot{q}_i), \omega_{\scriptscriptstyle B_i}(q_i,\dot{q}_i)$.

In addition, we denote as $p_{\scriptscriptstyle E_i}:\mathbb{R}^{n_i}\to\mathbb{R}^3,\eta_{\scriptscriptstyle E_i}:\mathbb{R}^{n_i}\to\mathbb{T}$ the position and Euler-angle orientation of agent $i$'s end-effector. More specifically, it holds that: 
\begin{align*}
p_{\scriptscriptstyle E_i}(q_i) & = p_{\scriptscriptstyle B_i} + R_{\scriptscriptstyle B_i}(\eta_{\scriptscriptstyle B_i})k_{p_i}(\alpha_i), \notag \\
\eta_{\scriptscriptstyle E_i}(q_i) & = \eta_{\scriptscriptstyle B_i} + k_{\eta_i}(\alpha_i),
\end{align*}
where $k_{p_i}:\mathbb{R}^{n_{\alpha_i}}\to\mathbb{R}^3,k_{\eta_i}:\mathbb{R}^{n_{\alpha_i}}\to\mathbb{T}$ are the forward kinematics of the robotic arm \cite{siciliano}, and $R_{\scriptscriptstyle B_i}:\mathbb{T}\to SO(3)$ is the rotation matrix of the agent $i$'s base.

Let also $v_i=[\dot{p}^\top_{\scriptscriptstyle E_i},\omega^\top_i]^\top:\mathbb{R}^{n_i}\times\mathbb{R}^{n_i}\rightarrow\mathbb{R}^{6}$ denote a function that represents the generalized velocity of agent $i$'s end-effector, with  $\omega_{i}:\mathbb{R}^{n_i}\times\mathbb{R}^{n_i}\rightarrow\mathbb{R}^{3}$ being the angular velocity. 
Then, $v_i$ can be computed as: 
\begin{align}
	v_i(q_i,\dot{q}_i) & = 
	\begin{bmatrix}
		\dot{p}_{\scriptscriptstyle E_i}(q_i)\\ 
		\omega_i(q_i,\dot{q}_i)
	\end{bmatrix} \notag \\
	& =
	\begin{bmatrix}
		\dot{p}_{\scriptscriptstyle B_i} - S(R_{\scriptscriptstyle B_i}(\eta_{\scriptscriptstyle B_i})k_{p_i}(\alpha_i))\omega_{\scriptscriptstyle B_i}(q_i,\dot{q}_i) + R_{\scriptscriptstyle B_i}(\eta_{\scriptscriptstyle B_i})\tfrac{\partial k_{p_i}(\alpha_i)}{\partial \alpha_i}\\ 
		\omega_{\scriptscriptstyle B_i}(q_i,\dot{q}_i) + R_{\scriptscriptstyle B_i}(\eta_{\scriptscriptstyle B_i})J_{\scriptscriptstyle A_i}(q_i)\dot{\alpha}_i \label{eq:diff_kinematics}
	\end{bmatrix},
\end{align}
where $J_{\scriptscriptstyle A_i}:\mathbb{R}^{n_{\alpha_i}}\to\mathbb{R}^{3\times n_{\alpha_i}}$ is the angular Jacobian of the robotic arm with respect to the agent's base \cite{siciliano}. The differential kinematics \eqref{eq:diff_kinematics} can be written as:
\begin{equation}
	v_i(q_i,\dot{q}_i) = 
	J_i(q_i)\dot{q}_i,	\label{eq:diff_kinematics_2}
\end{equation}
where the Jacobian matrix $J_i:\mathbb{R}^{n_i}\to\mathbb{R}^{6\times n_i}$ is given by: $$J_i(q_i) \coloneqq \begin{bmatrix}
I_3 & -S(R_{\scriptscriptstyle B_i}(\eta_{\scriptscriptstyle B_i})k_{p_i}(\alpha_i))J_{\scriptscriptstyle B_i}(\eta_{\scriptscriptstyle B_i}) & R_{\scriptscriptstyle B_i}(\eta_{\scriptscriptstyle B_i})\tfrac{\partial k_{p_i}(\alpha_i)}{\partial \alpha_i} \\
0_{3\times 3} & J_{\scriptscriptstyle B_i}(\eta_{\scriptscriptstyle B_i}) & R_{\scriptscriptstyle B_i}(\eta_{\scriptscriptstyle B_i})J_{\scriptscriptstyle A_i}(q_i)
\end{bmatrix}.$$  

\begin{remark}
	Note that $J_{\scriptscriptstyle B_i}(\phi_{\scriptscriptstyle B_i}, \theta_{\scriptscriptstyle B_i}, \psi_{\scriptscriptstyle B_i})$ becomes singular at representation singularities \footnote{Other representations, such as rotation matrices or unit quaternions, do not exhibit such singularities. The prevent, however, global stabilization by continuous controllers due to topological obstructions.}, when $\theta_{\scriptscriptstyle B_i} = \pm \tfrac{\pi}{2}$ and $J_i(q_i)$ becomes singular at kinematic singularities defined by the set $$\mathcal{Q}_i = \{q_i\in\mathbb{R}^{n_i} : \det(J_i(q_i)[J_i(q_i)]^\top) = 0 \}, i \in \mathcal{N}.$$ In the following, we will aim at guaranteeing that $q_i$ will always be in the closed set: $$\widetilde{\mathcal{Q}}_i = \{q_i\in\mathbb{R}^{n_i} : \lvert \det(J_i(q_i)[J_i(q_i)]^\top) \rvert \geq \varepsilon > 0\}, i \in \mathcal{N},$$ for a small positive constant $\varepsilon$. 
\end{remark}

The joint-space dynamics for agent $i\in\mathcal{N}$ can be computed using the Lagrangian formulation:
\begin{equation}
	B_{i}(q_i)\ddot{q}_i+N_{i}(q_i,\dot{q}_i)\dot{q}_i+g_{q_i}(q_i) = \tau_{i} - [J_i(q_i)]^\top\lambda_{i},   \label{eq:manipulator joint space dynamics}
\end{equation}
where $B_{i}:\mathbb{R}^{n_i}\rightarrow\mathbb{R}^{n_i\times n_i}$ is the joint-space positive definite inertia matrix,  $N_i:\mathbb{R}^{n_i}\times\mathbb{R}^{n_i}\rightarrow\mathbb{R}^{n_i\times n_i}$ represents the joint-space Coriolis matrix, $g_{q_i}:\mathbb{R}^{n_i}\rightarrow\mathbb{R}^{n_i}$
is the joint-space gravity vector, 
$\lambda_{i}\in\mathbb{R}^{6}$ is the generalized force vector that agent $i$ exerts on the object and $\tau_i\in\mathbb{R}^{n_i}$ is the vector of generalized joint-space inputs, with $\tau_i = [\lambda^\top_{\scriptscriptstyle B_i}, \tau^\top_{\alpha_i}]^\top$, where $\lambda_{\scriptscriptstyle B_i} = [f^\top_{\scriptscriptstyle B_i}, \mu^\top_{\scriptscriptstyle B_i}]^\top \in \mathbb{R}^6$ is the generalized force vector on the center of mass of the agent's base and $\tau_{\alpha_i}\in\mathbb{R}^{n_{\alpha_i}}$ is the torque inputs of the robotic arms' joints. By differentiating \eqref{eq:diff_kinematics_2} we obtain:
\begin{equation}
\dot{v}_i(q_i,\dot{q}_i) = J_i(q_i)\ddot{q}_i + \dot{J}_i(q_i)\dot{q}_i. \label{eq:acceleration}
\end{equation}
By inverting \eqref{eq:manipulator joint space dynamics} and using \eqref{eq:diff_kinematics_2} and \eqref{eq:acceleration}, we can obtain the task-space agent dynamics \cite{siciliano}:
\begin{align}
	& M_{i}(q_i)v^d_i(q_i,\dot{q}_i,\ddot{q}_i)+C_{i}(q_i,\dot{q}_i)v_i(q_i,\dot{q}_i)+g_{i}(q_i) =   u_{i} - \lambda_{i},   \label{eq:manipulator task space dynamics}
\end{align}
with the corresponding task-space terms $M_i:\mathbb{R}^{n_i}\backslash \mathcal{Q}_i \to\mathbb{R}^{6\times 6}$, $C_i:\mathbb{R}^{n_i}\backslash \mathcal{Q}_i \times \mathbb{R}^{n_i}\to\mathbb{R}^{6\times 6}$, $g_i:\mathbb{R}^{n_i}\backslash \mathcal{Q}_i\to\mathbb{R}^6$: 
\begin{align*}
M_i(q_i) & = \left[J_i(q_i)B^{-1}_i(q_i)J^\top_i(q_i)\right]^{-1}, \notag \\
C_i(q_i,\dot{q}_i)J_i(q_i)\dot{q}_i & = M_i(q_i)\left[J_i(q_i)B^{-1}_i(q_i)N_i - \dot{J}_i(q_i)\right]\dot{q}_i, \notag \\
g_i(q_i) & =  M_i(q_i)J_i(q_i)B^{-1}_i(q_i)g_{q_i}(q_i).
\end{align*}
The task-space input wrench $u_i$ can be translated to the joint space inputs $\tau_i\in\mathbb{R}^{{n}_i}$ via
\begin{equation}
	\tau_{i} = [J_{i}(q_i)]^{\top}u_{i}+\bar{\tau}_i(q_i), \label{eq:tau and u}
\end{equation}
where $\bar{\tau}_i$ belongs to the nullspace of $[J_{i}(q_i)]^{\top}$ and concerns over-actuated agents (see \cite{siciliano}). The term $\bar{\tau}_i$ concerns over-actuated agents and does not contribute to end-effector forces.

We define by $\mathcal{A}_i: \mathbb{R}^{n_i} \rightrightarrows \mathbb{R}^3, i \in \mathcal{N}$, the union of the ellipsoids 
that bound the $i$-th agent's volume, i.e., which is essentially the union of the ellipsoids that bound the volume of the agents' links.

\subsubsection{Object Dynamics}  \label{subsubsec:object dynamics}

Regarding the object, we denote its state as $x_{\scriptscriptstyle O}\in\mathbb{M}$, $v_{\scriptscriptstyle O}=[v^\top_{\scriptscriptstyle L,O}, \omega^\top_{\scriptscriptstyle O}]^\top \in\mathbb{R}^6$, representing the pose and velocity of the object's center of mass, with $x_{\scriptscriptstyle O} = [p^\top_{\scriptscriptstyle O}, \eta^\top_{\scriptscriptstyle O}]^\top$, $p_{\scriptscriptstyle O}\in\mathbb{R}^3$, $\eta_{\scriptscriptstyle O} = [\phi_{\scriptscriptstyle O}, \theta_{\scriptscriptstyle O}, \psi_{\scriptscriptstyle O}]^\top\in\mathbb{T}$. The second order Newton-Euler dynamics of the object are given by:
\begin{subequations} \label{eq:object dynamics} 
	\begin{align}
		\dot{x}_{\scriptscriptstyle O} & = [J_{\scriptscriptstyle O_r}(\eta_{\scriptscriptstyle O})]^{-1}v_{\scriptscriptstyle O}, \label{eq:object dynamics_1} \\ 
		\lambda_{\scriptscriptstyle O} & = M_{\scriptscriptstyle O}(x_{\scriptscriptstyle O})\dot{v}_{{\scriptscriptstyle O}}+C_{{\scriptscriptstyle O}}(x_{\scriptscriptstyle O},v_{\scriptscriptstyle O})v_{{\scriptscriptstyle O}}+g_{\scriptscriptstyle O}(x_{\scriptscriptstyle O}),\label{eq:object dynamics_2} 
	\end{align}
\end{subequations}
where $M_{\scriptscriptstyle O}:\mathbb{M}\rightarrow\mathbb{R}^{6\times6}$ is the positive definite inertia matrix, $C_{{\scriptscriptstyle O}}:\mathbb{M}\times\mathbb{R}^6\rightarrow\mathbb{R}^{6\times6}$ is the Coriolis matrix, and $g_{\scriptscriptstyle O}:\mathbb{M}\rightarrow\mathbb{R}^{6}$ is the gravity vector. In addition, $J_{\scriptscriptstyle O_r}:\mathbb{T}\rightarrow\mathbb{R}^{6\times6}$ is the object representation Jacobian $J_{\scriptscriptstyle O_r}(\eta_{\scriptscriptstyle O}) \coloneqq \text{diag}\{I_3, J_{\scriptscriptstyle O_{r,\theta}}(\eta_{\scriptscriptstyle O}) \}$, with $$J_{\scriptscriptstyle O_{r,\theta}}(\eta_{\scriptscriptstyle O}) = \begin{bmatrix}
1 & 0 & \sin(\theta_{\scriptscriptstyle O}) \\
0 & \cos(\phi_{\scriptscriptstyle O}) & -\cos(\theta_{\scriptscriptstyle O})\sin(\phi_{\scriptscriptstyle O}) \\
0 & \sin(\phi_{\scriptscriptstyle O}) & \cos(\theta_{\scriptscriptstyle O})\cos(\phi_{\scriptscriptstyle O})
\end{bmatrix},$$ which is singular when $\theta_{\scriptscriptstyle O}= \pm \tfrac{\pi}{2}$. Finally, $\lambda_{\scriptscriptstyle O}\in\mathbb{R}^6$ is the force vector acting on the object's center of mass. Also, similarly to the robotic agents, we define by $\mathcal{C}_{\scriptscriptstyle O}: \mathbb{M}\rightrightarrows \mathbb{R}^3$ the bounding ellipsoid of the object.

\subsubsection{Coupled Dynamics} \label{subsubsec: coupled dynamics}

Consider $N$ robotic agents rigidly grasping an object. Then, the coupled system object-agents behaves like a closed-chain robot and we can express the object's pose and velocity as a function of $q_i$ and $\dot{q}_i$, $\forall i\in\mathcal{N}$. 
Hence, In view of Fig. \ref{fig:Two-robotic-arms}, we conclude that:
\begin{subequations} \label{eq:coupled kinematics}
	\begin{align}
		p_{{\scriptscriptstyle O}} = p_{{\scriptscriptstyle O_i}}(q_i) &\coloneqq p_{{\scriptscriptstyle E_{i}}}(q_i) + p_{{\scriptscriptstyle O/E_i}}(q_i)  \notag\\
		& \coloneqq p_{{\scriptscriptstyle E_{i}}}(q_i) + R_{{\scriptscriptstyle E_i}}(q_i) p^{\scriptscriptstyle E_i}_{{\scriptscriptstyle O/E_i}} \label{eq:coupled kinematics_p},\\ 
		\eta_{\scriptscriptstyle O} = \eta_{\scriptscriptstyle O_i}(q_i) &=  \eta_{\scriptscriptstyle E_i}(q_i) +  \eta_{\scriptscriptstyle O/E_i}, \label{eq:coupled kinematics_ksi} 	
	\end{align}
\end{subequations}
for every $i\in\mathcal{N}$, where $p_{\scriptscriptstyle O_i}:\mathbb{R}^{n_i}\to\mathbb{R}^{3}, \eta_{\scriptscriptstyle O_i}:\mathbb{R}^{n_i}\to\mathbb{}$ are local functions of the agents that provide the object's pose, $p^{\scriptscriptstyle E_i}_{{\scriptscriptstyle O/E_i}}$ represents the constant distance and  $\eta_{\scriptscriptstyle O/E_i}$ the relative orientation offset between the $i$th agent's end-effector and the object's center of mass, which are considered known. 
The grasp rigidity implies that $\omega_{\scriptscriptstyle E_i}(q_i)=\omega_{\scriptscriptstyle O}$, $\forall i\in\mathcal{N}$. Therefore, by differentiating \eqref{eq:coupled kinematics_p}, we can also express $v_{\scriptscriptstyle O}$ as a function of $q_i, \dot{q}_i$ as  
\begin{equation}
	v_{{\scriptscriptstyle O}} = v_{{\scriptscriptstyle O_i}}(q_i,\dot{q}_i) \coloneqq J_{{\scriptscriptstyle i_O}}(q_i)v_i(q_i,\dot{q}_i), \label{eq:object-end-effector jacobian}
\end{equation}
from which, we obtain:
\begin{align}
 \dot{v}_{\scriptscriptstyle O_i}(q_i,\dot{q}_i) = J_{{\scriptscriptstyle i_O}}(q_i)v^d_i(q_i,\dot{q}_i,\ddot{q}_i) + \dot{J}_{\scriptscriptstyle i_O}(q_i) v_i(q_i,\dot{q}_i),\label{eq:object-end-effector jacobian_dot}
\end{align}
where $J_{\scriptscriptstyle i_O}:\mathbb{R}^{n_i}\rightarrow\mathbb{R}^{6\times6}$ is a smooth mapping representing the Jacobian from the object to the $i$-th agent:  
\begin{equation}
	J_{{\scriptscriptstyle i_O}}(q_i)=\left[\begin{array}{cc}
		I_3 & S(p_{{\scriptscriptstyle E_i/O}}(q_i))\\
		0_{{\scriptscriptstyle 3\times3}} & I_3
	\end{array}\right],
	\label{eq:jacobian O_i}
\end{equation}
and is always full rank due to the grasp rigidity.

\begin{remark}
	Since the geometric object parameters $p^{\scriptscriptstyle E_i}_{\scriptscriptstyle O/E_i}$ and $\eta_{\scriptscriptstyle O/E_i}$ are known, each agent can compute $p_{\scriptscriptstyle O}(q_i),\eta_{\scriptscriptstyle O}(q_i)$ and $v_{\scriptscriptstyle O}(q_i,\dot{q}_i)$  by \eqref{eq:coupled kinematics} and \eqref{eq:object-end-effector jacobian}, respectively, without employing any sensory data. In the same vein, all agents can also compute the object's bounding ellipsoid $\mathcal{C}_{\scriptscriptstyle O}(x_{\scriptscriptstyle O}(q_i))$.
\end{remark}

The Kineto-statics duality \cite{siciliano} along with the grasp rigidity suggest that the 
force $\lambda_{\scriptscriptstyle O}$ acting on the object center of mass and the generalized forces $\lambda_i, i\in\mathcal{N}$, exerted by the
agents at the contact points are related through: 
\begin{equation}
	\lambda_{\scriptscriptstyle O}=[G(q)]^\top\lambda,\label{eq:grasp matrix}
\end{equation}
where $\lambda=[\lambda^{\top}_{1}, \cdots, \lambda^{\top}_{N}]^{\top}\in\mathbb{R}^{6N}$ and $G:\mathbb{R}^{{n}}\rightarrow\mathbb{R}^{6N\times6}$ is the grasp matrix, with $G(q)=[[J_{{\scriptscriptstyle O_1}}(q_1)]^{\top},\cdots,[J_{{\scriptscriptstyle O_N}}(q_N)]^{\top}]^{\top}$ and $J_{\scriptscriptstyle O_i}(q_i) \coloneqq [J_{\scriptscriptstyle i_O}(q_i)]^{-1}$ the matrix inverse of $J_{\scriptscriptstyle i_O}(q_i)$, $i\in\mathcal{N}$.

Consider now the constants $c_i$, with $0< c_i < 1$ and $\sum\limits_{i\in\mathcal{N}}c_i = 1$, that play the role of load sharing coefficients for the agents. Then \eqref{eq:object dynamics_2} can be written as: 
\begin{align*}
& \sum\limits_{i\in\mathcal{N}}c_i\Big\{M_{\scriptscriptstyle O}(x_{\scriptscriptstyle O_i}(q_i))v^d_{\scriptscriptstyle O_i}(q_i,\dot{q}_i,\ddot{q}_i)  + g_{\scriptscriptstyle O}(x_{\scriptscriptstyle O_i}(q_i))  C_{\scriptscriptstyle O}(x_{\scriptscriptstyle O_i}(q_i), v_{\scriptscriptstyle O_i}(q_i,\dot{q}_i))v_{\scriptscriptstyle O_i}(q_i,\dot{q}_i) \Big\} \notag \\ 
&\hspace{120mm} = \sum\limits_{i\in\mathcal{N}}[J_{\scriptscriptstyle O_i}(q_i)]^\top\lambda_i,
\end{align*}
from which, by employing \eqref{eq:grasp matrix}, \eqref{eq:diff_kinematics_2}, \eqref{eq:acceleration}, \eqref{eq:object-end-effector jacobian} and \eqref{eq:object-end-effector jacobian_dot}, and after straightforward algebraic manipulations, we obtain the coupled dynamics:
\begin{align}
	&\sum\limits_{i\in\mathcal{N}}\Big\{\widetilde{M}_i(q_i)\ddot{q}_i + \widetilde{C}_i(q_i,\dot{q}_i)\dot{q}_i + \widetilde{g}_i(q_i) \Big\} = \sum\limits_{i\in\mathcal{N}}[J_{\scriptscriptstyle O_i}(q_i)]^\top u_i,
	\label{eq:coupled dynamics 2}
\end{align}	
where:
\begin{align*}
	\widetilde{M}_i(q_i) & \coloneqq c_iM_{\scriptscriptstyle O}(x_{\scriptscriptstyle O_i}(q_i))J_{i_{\scriptscriptstyle O}}(q_i)J_i(q_i) + [J_{\scriptscriptstyle O_i}(q_i)]^\top M_i(q_i)J_i(q_i) ,\\
	\widetilde{C}_i(q_i,\dot{q}_i) & \coloneqq [J_{\scriptscriptstyle O_i}(q_i)]^\top \Big( M_i(q_i)\dot{J}_i(q_i) + C_i(q_i,\dot{q}_i)J_i(q_i)\Big) + c_iM_{\scriptscriptstyle O}(x_{\scriptscriptstyle O_i}(q_i))J_{i_{\scriptscriptstyle O}}(q_i)\dot{J}_i(q_i), \notag \\
	&\hspace{47mm} + c_iM_{\scriptscriptstyle O}(x_{\scriptscriptstyle O_i}(q_i))\dot{J}_{i_{\scriptscriptstyle O}}(q_i)J_i(q_i) + c_iC_{\scriptscriptstyle O}(x_{\scriptscriptstyle O_i}(q_i),v_{\scriptscriptstyle O_i}(q_i,\dot{q}_i)), \\
\widetilde{g}_i(q_i) & \coloneqq c_ig_{\scriptscriptstyle O}(x_{\scriptscriptstyle O_i}(q_i)) + [J_{\scriptscriptstyle O_i}(q_i)]^\top g_i(q_i),
\end{align*}
where $x_{\scriptscriptstyle O_i} \coloneqq [\dot{p}^\top_{\scriptscriptstyle O_i}, \eta^\top_{\scriptscriptstyle O_i}]^\top\in \mathbb{M}$ and $i\in\mathcal{N}$.

\begin{remark}
	Note that the agents dynamics under consideration hold for generic robotic agents comprising of a moving base and a robotic arm. Hence, the considered framework can be applied for mobile, aerial, or underwater manipulators. 
\end{remark}

\begin{problem}
	Consider $N$ robotic agents, rigidly grasping an object, governed by the coupled dynamics \eqref{eq:coupled dynamics 2}. Given a desired pose $x_{\text{des}}$ for the object, design the control inputs $u_i\in\mathbb{R}^{6N}$ such that $\lim\limits_{t\to\infty} \| x_{\scriptscriptstyle O}(t) - x_{\text{des}} \| \to 0$, while ensuring the satisfaction of the following collision avoidance and singularity properties:
	\begin{enumerate}
		\item $\mathcal{A}_i(q_i(t))\cap\mathcal{O}_z = \emptyset, \forall i\in\mathcal{N}, z\in\mathcal{Z}$,
		\item $\mathcal{C}_{\scriptscriptstyle O}(x_{\scriptscriptstyle O}(t))\cap\mathcal{O}_z = \emptyset, \forall z \in \mathcal{Z}$,
		\item $\mathcal{A}_i(q_i(t))\cap\mathcal{A}_{j}(q_{j}(t)) = \emptyset, \forall i, j \in \mathcal{N}, i\neq j$,
		\item $-\tfrac{\pi}{2} < -\bar{\theta}\leq \theta_{\scriptscriptstyle O}(t) \leq \bar{\theta} < \tfrac{\pi}{2}$,
		\item $-\tfrac{\pi}{2} < -\bar{\theta}\leq \theta_{\scriptscriptstyle B_i}(t) \leq \bar{\theta} < \tfrac{\pi}{2}$, $\forall i \in \mathcal{N}$,
		\item $q_i \in \widetilde{\mathcal{Q}}_i$, $\forall i \in \mathcal{N}$,
	\end{enumerate} 
	$\forall t\in\mathbb{R}_{\geq 0}$, for $0 < \bar{\theta} < \tfrac{\pi}{2}$,  as well as the velocity and input constraints: $\lvert \tau_{i_k}(t) \rvert \leq \bar{\tau}_i, \lvert \dot{\tau}_{i_k}(t) \rvert \leq \bar{\dot{\tau}}_i, \lvert \dot{q}_{i_k}(t) \rvert \leq \bar{\dot{q}}_i, \forall k\in\{1,\dots,n_i\}, i\in\mathcal{N}$, for some positive constants $\bar{\tau}_i, \bar{\dot{q}}_i, i\in\mathcal{N}$.
\end{problem}
Regarding the joint velocity constraints, we impose for the arm joint velocities the constraint $\rVert \dot{\alpha}_{i}(t) \lVert \leq 1, \forall i\in\mathcal{N}$, which maximizes the manipulability ellipsoid of the arms \cite{siciliano}, and hence increases robotic manipulability.
Specifications $1)-3)$ in the aforementioned problem stand for collision avoidance between the agents, the objects, and the workspace obstacles and specifications $4)-6)$ stand for representation and kinematic singularities. 

In order to solve the aforementioned problem, we need the following assumption regarding the workspace:

\begin{assumption} \label{ass:feasility_assumption}
	(Problem Feasibility Assumption) The set $\{q \in \mathbb{R}^n : \mathcal{A}_i(q_i)\cap\mathcal{O}_z = \emptyset, \mathcal{A}_i(q_i)\cap\mathcal{A}_\ell(q_\ell) = \emptyset, \mathcal{C}_i(x_{\scriptscriptstyle O_i}(q_i))\cap\mathcal{O}_z = \emptyset, \forall i,\ell\in\mathcal{N}, i\neq \ell, z\in\mathcal{Z}\}$, is connected.	
\end{assumption}

\begin{assumption} \label{ass:sensing_assumption}
(Sensing and communication capabilities) Each agent $i\in\mathcal{N}$ is able to continuously measure the other agents' state $q_j$, $j\in\mathcal{N}\backslash\{i\}$. Moreover, each agent $i\in\mathcal{N}$ is able to communicate with the other agents $j\in\mathcal{N}\backslash\{i\}$ without any delays.
\end{assumption}
Note that the aforementioned sensing assumption is reasonable, since in cooperative manipulation tasks, the agents are sufficiently close to each other, and therefore potential sensing radii formed by realistic sensors are large enough to cover them. Moreover, each agent $i\in\mathcal{N}$ can construct at every time instant the set-valued functions $\mathcal{A}_j(q_j)$, $\forall j\in\mathcal{N}\backslash\{i\}$, whose structure can be transmitted off-line to all agents. Let us define also the sets: 
\begin{align*}
\mathcal{S}_{i, \scr O} & \coloneqq \{q_i\in\mathbb{R}^{n_i} : \mathcal{A}_i(q_i)\cap\mathcal{O}_z \neq \emptyset, \forall z \in \mathcal{Z} \}, \notag \\
\mathcal{S}_{i, \scriptscriptstyle A} & \coloneqq \{q\in\mathbb{R}^n : \mathcal{A}_i(q_i)\cap\mathcal{A}_{j}(q_\ell) \neq \emptyset, \forall \ell \in \mathcal{N} \backslash \{i\} \},
\end{align*}
\begin{align*}
\widetilde{\mathcal{S}}_{i, \scriptscriptstyle A}([q_\ell]_{\ell\in\mathcal{N}\backslash\{i\}}) & \coloneqq \{q_i\in\mathbb{R}^{n_i} : q\in \mathcal{S}_{i,\scr A} \} \notag \\
\mathcal{S}_{\scriptscriptstyle O_i} & \coloneqq \{ q_i\in\mathbb{R}^{n_i}: \mathcal{C}_{\scriptscriptstyle O}(x_{\scriptscriptstyle O_i}(q_i))\cap\mathcal{O}_z \neq \emptyset, \forall z \in \mathcal{Z}\}, 
\end{align*}
for every $i \in \mathcal{N}$, associated with the desired collision-avoidance properties, where the notation $[q_\ell]_{\ell\in\mathcal{N}\backslash\{i\}}$ stands for the stack vector of all $q_\ell$, $\ell\in\mathcal{N}\backslash\{i\}$. Moreover, define the projection sets for agent $i$ as the set-valued functions: 
\begin{align*}
\widetilde{\mathcal{S}}_{i,\scr A}([q_\ell]_{\scr \ell\in\mathcal{N}\backslash\{i\}}) \coloneqq \{ q_i\in\mathbb{R}^{n_i} : q\in \mathcal{S}_{i, \scr A} \}, i \in \mathcal{N},
\end{align*}
where the notation $[q_\ell]_{\scr \ell\in\mathcal{N}\backslash\{i\}}$ stands for the stack vector of all $q_\ell, \ell\in\mathcal{N}\backslash\{i\}$.

\section{Main Results} \label{sec:solution}

In this section, a systematic solution to Problem 1 is introduced. Our overall approach builds on designing a NMPC scheme for the system of the manipulators and the object. The proposed methodology is decentralized, since we do not consider a centralized system that calculates all the control signals and transmits them to the agents, like in our previous work \cite{alex_chris_med_2017}. As expected, this relaxes greatly the computational burden of the NMPC approach, which is also verified by the simulation results. 
 To achieve that, we employ a leader-follower perspective. More specifically, as will be explained in the sequel, at each sampling time, a leader agent solves part of the coupled dynamics \eqref{eq:coupled dynamics 2} via an NMPC scheme, and transmits its predicted variables to the rest of the agents.  Assume, without loss of generality, that the leader corresponds to agent $i=1$. Loosely speaking, the proposed solution proceeds as follows: agent $1$ solves, at each sampling time step, the receding horizon model predictive control subject to the forward dynamics:
\begin{equation}
	\widetilde{M}_1(q_1)\ddot{q}_1 + \widetilde{C}_1(q_1,\dot{q}_1)\dot{q}_1 + \widetilde{g}(q_1)  = [J_{\scriptscriptstyle O_1}(q_1)]^\top u_1, \label{eq:dynamics_leader}
\end{equation}
and a number of inequality constraints, as will be clarified later. After obtaining a control input sequence and a set of predicted variables for $q_1,\dot{q}_1$, denoted as $\hat{q}_1,\hat{\dot{q}}_1$, it transmits the corresponding predicted state for the object $x_{\scriptscriptstyle O_1}(\hat{q}_1), v_{\scriptscriptstyle O_1}(\hat{q}_1,\hat{\dot{q}}_1)$ for the control horizon to the other agents $\{2,\dots,N\}$. Then, the followers solve the receding horizon NMPC subject to the forward dynamics:
\begin{equation}
	\widetilde{M}_i(q_i)\ddot{q}_i + \widetilde{C}_i(q_i,\dot{q}_i)\dot{q}_i + \widetilde{g}(q_i)  = [J_{\scriptscriptstyle O_i}(q_i)]^\top u_i, \label{eq:dynamics_followers} 
\end{equation}
and the state equality constraints: 
\begin{align} \label{eq:equality state constr 1}
	& x_{\scriptscriptstyle O_i}(q_i) = x_{\scriptscriptstyle O_1}(\hat{q}_1), v_{\scriptscriptstyle O_i}(q_i,\dot{q}_i) = v_{\scriptscriptstyle O_1}(\hat{q}_1,\hat{\dot{q}}_1),
\end{align}
$i \in \{2,\dots,N\}$ as well as a number of inequality constraints that incorporate obstacle and inter-agent collision avoidance. More specifically, we consider that there is a priority sequence among the agents, which we assume, without loss of generality, that is defined by $\{1,\dots,N\}$, and can be transmitted off-line to the agents. Each agent, after solving its optimization problem, transmits its calculated predicted variables to the agents of lower priority, which take them into account for collision avoidance. Note that the coupled object-agent dynamics are implicitly taken into account in equations \eqref{eq:dynamics_leader}, \eqref{eq:dynamics_followers} in the following sense. Although the coupled model \eqref{eq:coupled dynamics 2} does not imply that each one of these equations is satisfied, by forcing each agent to comply with the specific dynamics through the optimization procedure, we guarantee that \eqref{eq:coupled dynamics 2} is satisfied, since it's the result of the addition of \eqref{eq:dynamics_leader} and \eqref{eq:dynamics_followers}, for every $i =1$ and $i \in \{2,\dots,N\}$, respectively. Intuitively, the leader agent is the one that determines the path that the object will navigate through, and the rest of the agents are the followers that contribute to the transportation.  Moreover, the equality constraints \eqref{eq:equality state constr 1} guarantee that the predicted variables of the agents $\{2,\dots,N\}$ will comply with the rigidity at the grasping points through the equality constraints \eqref{eq:equality state constr 1}.

By using the notation $x_i \coloneqq [x^\top_{i_1},x^\top_{i_2}]^\top\coloneqq[q^\top_i, \dot{q}^\top_i]^\top\in\mathbb{R}^{2n_i}$, $i\in\mathcal{N}$, the nonlinear dynamics of each agent can be written as: 
\begin{equation} \label{eq:main_system}
	\dot{x}_i = \widetilde{f}_i(x_i,u_i) \coloneqq 	
	\begin{bmatrix}
		\widetilde{f}_{i_1}(x_i) \\
		\widetilde{f}_{i_2}(x_i,u_i) 		
	\end{bmatrix}, 
\end{equation}
where $\widetilde{f}_i:E_i\times\mathbb{R}^{6}\to\mathbb{R}^{2n_i}$ is the locally Lipschitz function: 
\begin{align*}
\widetilde{f}_{i_1}(x_i,u_i) & = x_{i_2}, \\
\widetilde{f}_{i_2}(x_i,u_i) & = \widehat{M}_i(q_i)\Big( [J_{\scriptscriptstyle O_i}(q_i)]^\top u_i - \widetilde{C}_i(q_i,\dot{q}_i)\dot{q} - \widetilde{g}_i(q_i)  \Big), i \in \mathcal{N},
\end{align*}
where $\widehat{M}_i: \mathbb{R}_{n_i}\backslash\mathcal{Q}_i\to\mathbb{R}^{n_i\times6}$, is the pseudo-inverse $$\widehat{M}_i(q_i) \coloneqq \widetilde{M}_i(q_i)\Big(\widetilde{M}_i(q_i)[\widetilde{M}_i(q_i)]^\top\Big)^{-1},$$ and $E_i\coloneqq \mathbb{R}^{n_i}\backslash \mathcal{Q}_i\times\mathbb{R}^{n_i}$, $\forall i\in\mathcal{N}$. It can be proved that in the set $\mathbb{R}^{n_i}\backslash\mathcal{Q}_i$ the matrix $\widetilde{M}_i(q_i)[\widetilde{M}_i(q_i)]^\top$ has full rank and hence, $\widehat{M}_i(q_i)$ is well defined for all $q\in \mathbb{R}^{n_i}\backslash\mathcal{Q}_i$. We define then the error vector $e_1:E_1 \to\mathbb{M}\times\mathbb{R}^6$, as:
	\begin{align*}
		e_1(x_1) \coloneqq \begin{bmatrix}
			x_{\scriptscriptstyle O_1}(q_1) - x_\text{des} \\
			v_{\scriptscriptstyle O_1}(q_1,\dot{q}_1)
		\end{bmatrix},
	\end{align*}
	which gives us the \emph{error dynamics}:
	\begin{equation} \label{eq:error_dynamics}
		\dot{e}_1 = g_1(x_1,u_1),
	\end{equation}
	where the function $g_1:E_1\times\mathbb{R}^{6}\to\mathbb{R}^{2n_i}$ is given by:
\begin{align*}
	&g_1(x_1,u_1) \coloneqq 
	\begin{bmatrix}	
			[J_{\scr O_r}(\eta_{\scr O_1}(q_1))]^{-1} J_{1_{\scr O}}(q_1)J_1(q_1)\dot{q}_1 \\
			J_{1_{\scr O}}(q_1) J_1(q_1) f_{1_2}(x_1,u_1) + \Big(J_{1_{\scr O}} \dot{J}_1(q_1) + \dot{J}_{1_{\scr O}}(q_1)J_1(q_1)\Big)\dot{q}_1.		
		\end{bmatrix},
\end{align*}
where we employed \eqref{eq:error_dynamics}, \eqref{eq:object dynamics_1}, \eqref{eq:acceleration}, and \eqref{eq:object-end-effector jacobian_dot}.
\begin{remark}
It can be concluded that $g_1(\cdot,u_1)$ is \emph{Lipschitz continuous} in $E_1$ since it is continuously differentiable in its domain. Thus, for every $x_1,x_1' \in E_1$, with $x_1\neq x_1'$, there exists a Lipschitz constant $L_g$ such that: $|g(x_1,u)-g(x_1',u)| \le L_g \|x_1-x_1'\|$.
\end{remark}
From \eqref{eq:tau and u}, we have that $\dot{\tau}_i = [\dot{J}_i(q_i)]^\top u_i + [J_i(q_i)]^\top\dot{u}_i$. Hence, the constraints for $\tau_{i_k}$ and $\dot{\tau}_{i_k}$, $k\in\mathbb{R}^{n_i}$,$i\in\mathcal{N}$, can be written as coupled state-input constraints: 
\begin{align*}
\lVert \tau_i \rVert \leq \bar{\tau}_i & \Leftrightarrow \lVert [J(q_i)]^\top u_i \rVert \leq \bar{\tau}_i, \notag \\
\lVert \dot{\tau}_i \rVert \leq \bar{\dot{\tau}}_i & \Leftrightarrow \lVert [\dot{J}_i(q_i)]^\top u_i+ [J_i(q_i)]^\top\dot{u}_i \rVert \leq \bar{\dot{\tau}}_i.
\end{align*}
Let us now define the following sets $U_i \subseteq \mathbb{R}^{6\times6\times(2n_i)}$:
\begin{align}
	& U_i \coloneqq \left\{ (u_i,\dot{u}_i,x_i)\in\mathbb{R}^{6\times 6\times(2n_i)}: \lVert [J(q_i)]^\top u_i \rVert \leq \bar{\tau}_i, \right. \notag \\
	&\hspace{56mm} \left. \lVert [\dot{J}_i(q_i)]^\top u_i+ [J_i(q_i)]^\top\dot{u}_i \rVert \leq \bar{\dot{\tau}}_i \right\}, i \in \mathcal{N},
\end{align}
as the sets that capture the control input constraints of \eqref{eq:main_system}, as well as their projections
\begin{align}
	U_{i,u} \coloneqq \left\{u_i\in \mathbb{R}^6: (u_i,\dot{u}_i,x_i)\in U_i \right\}, i\in\mathcal{N}.
\end{align}
Define also the set-valued functions $X_i:\mathbb{R}^{n-n_i}\rightrightarrows \mathbb{R}^{2n_i}$, $i\in\mathcal{N}$, by: 
\begin{align}
X_1([q_\ell]_{\scr \ell\in\{2,\dots,N\}}) & \coloneqq \Big\{ x_1 \in \mathbb{R}^{2n_1} : \theta_{\scriptscriptstyle O_1}(q_1)\in [-\bar{\theta}, \bar{\theta}], \theta_{\scriptscriptstyle B_1}\in [-\bar{\theta}, \bar{\theta}], \lvert \dot{q}_{k_1}\rvert \leq \bar{\dot{q}}_1, \notag \\ 
&\hspace{10mm} q_1 \in \widetilde{\mathcal{Q}}_1 \backslash \left(\mathcal{S}_{1, {\scriptscriptstyle O}} \cup \widetilde{\mathcal{S}}_{1, \scriptscriptstyle A}([q_\ell]_{\scr \ell\in\{2,\dots,N\}}) \right), x_{\scriptscriptstyle O_1}(q_1)\in\mathbb{R}^3\backslash S_{\scriptscriptstyle O_1} \Big\}, \notag \\
X_i([q_\ell]_{\scr \ell\in\mathcal{N}\backslash\{i\}}) & \coloneqq \Big\{x_i \in \mathbb{R}^{2n_i} : \theta_{\scriptscriptstyle B_i}\in [-\bar{\theta},\bar{\theta}], \lvert \dot{q}_{k_i} \rvert \leq \bar{\dot{q}}_i, \notag  q_i \in\widetilde{\mathcal{Q}}_i\backslash\left(\mathcal{S}_{i, {\scriptscriptstyle O}} \cup \mathcal{S}_{i, \scriptscriptstyle A}([q_\ell]_{\scr \ell\in\mathcal{N}\backslash\{i\}}) \right) \Big\},
\end{align}
$i\in\{2,\dots,N\}$. Note that $q_i\in X_i([q_\ell]_{\scr \ell\in\mathcal{N}\backslash\{i\}}) \implies q_i\notin\mathcal{Q}_i$, $\forall i\in\mathcal{N}$.

The sets $X_i$ capture all the state constraints of the system dynamics \eqref{eq:main_system}, i.e., representation- and singularity-avoidance, collision avoidance among the agents and the obstacles, as well as collision avoidance of the object with the obstacles, which is assigned to the leader agent only We further define the set-valued functions $\mathcal{E}_1: \mathbb{R}^{n-n_1}\rightrightarrows \mathbb{M}\times\mathbb{R}^6$ by: $$\mathcal{E}_1([q_\ell]_{\scr \ell\in\{2,\dots,N\}}) \coloneqq \{ e_1(x_1) \in \mathbb{M}\times\mathbb{R}^6 : x_1\in X_1([q_\ell]_{\scr \ell\in\{2,\dots,N\}})\}.$$ Note that only the leader agent is responsible for deriving a collision- and representation singularity-free path for the object.  

The main problem at hand is the design of a \emph{feedback control law} $u_1 \in U_1$ for agent $1$ which guarantees that the error signal $e_1$ with dynamics given in \eqref{eq:error_dynamics}, satisfies $\displaystyle \lim_{t \to \infty} \|e_1(x_1(t))\| \to 0$, while ensuring singularity avoidance, collision avoidance between the agents, between the agents and the obstacles as well as the object and the obstacles. The role of the followers $\{2,\dots,N\}$ is, through the load-sharing coefficients $c_2,\dots,c_N$ in \eqref{eq:coupled dynamics 2}, to contribute to the object trajectory execution, as derived by the leader agent $1$. In order to solve the aforementioned problem, we propose a NMPC scheme, that is presented hereafter.

Consider a sequence of sampling times $\{t_j\}$, $j \in \mathbb{N}$ with a constant sampling period $h$, $0 < h < T_p$, where $T_p$ is the prediction horizon, such that: $t_{j+1} = t_j + h$, $j \in \mathbb{N}$. Hereafter we will denote by $j$ the sampling instant. In sampled-data NMPC, a FHOCP is solved at the discrete sampling time instants $t_j$ based on the current state error information $e_1(x_1(t_j))$. The solution is an optimal control signal $\hat{u}_1^\star(s)$, computed over $s \in [t_j,t_j+T_p]$. For agent $1$, the open-loop input signal applied in between the sampling instants is given by the solution of the following FHOCP:
\begin{subequations}
	\begin{align}
	&\hspace{0mm}\min\limits_{\hat{u}_1(\cdot)} J_1(e_1(x_1(t_j)),\hat{u}_1(\cdot)) = \min\limits_{\hat{u}_1(\cdot)} \Bigg\{  V_1(e_1(\hat{x}_1(t_j+T_p))) \notag \\
	&\hspace{70mm} + \int_{t_j}^{t_j+T_p} \Big[ F_1(e_1(\hat{x}_1(s)), \hat{u}_1(s)) \Big] ds \Bigg\}  \label{eq:mpc_minimazation}
	\end{align}
	\begin{align}
	&\hspace{0mm}\text{subject to:} \notag \\
	&\hspace{1mm} \dot{e}(\hat{x}_1(s)) = g_1(\hat{x}_1(s),\hat{u}_1(s)), \ e_1(\hat{x}_1(t_j)) = e_1(x_1(t_j)),   \label{eq:diff_mpc} \\
	&\hspace{1mm} e_1(\hat{x}_1(s)) \in \mathcal{E}_{1}([q_\ell(t_j)]_{\scr \ell\in\{2,\dots,N\}}), s \in [t_j,t_j+T_p], \label{eq:mpc_constrained_set_1} \\
	&\hspace{1mm} (\hat{u}_1(s),\hat{\dot{u}}_1(s),\hat{x}_1(s))\in U_1, s \in [t_j,t_j+T_p], \label{eq:mpc_constrained_set_2} \\
	&\hspace{1mm} e_1(\hat{x}_1(t_j+T_p)) \in \mathcal{F}_1([q_\ell]_{\ell\in\{2,\dots,N\}}). \label{eq:mpc_terminal_set}
	\end{align}
\end{subequations}
At a generic time $t_j$ then, agent $1$ solves the aforementioned FHOCP. The notation $\hat{(\cdot)}$ is used to distinguish the predicted variables which are internal to the controller, corresponding to the system \eqref{eq:diff_mpc}. This means that $e_1(\hat{x}_1(\cdot))$ is the solution of \eqref{eq:diff_mpc} driven by the control input $\hat{u}_1(\cdot): [t_j, t_j+T_p] \to U_1$ with initial condition $e_1(x_1(t_j))$. Note that, since the prediction horizon is finite, the predicted values are not the same with the actual closed-loop values (see \cite{frank_2003_nmpc_bible}). In the following, we use the notation $\mathcal{E}_1(\cdot)$ instead of $\mathcal{E}_1([q_\ell]_{\ell\in\{2,\dots,N\}})$  for brevity. 
The functions $F_1 : \mathcal{E}_{1}(\cdot) \times U_{1,u} \to \mathbb{R}_{\geq 0}$, $V_1: \mathcal{E}_1(\cdot) \to \mathbb{R}_{\geq 0}$ stand for the \emph{running cost} and the \emph{terminal penalty cost}, respectively, and they are defined as: $F_1 \big(e_1, u_1\big) = e_1^{\top} Q_1 e_1 + u_1^{\top} R_1 u_1$, $V_1 \big(e_1\big) = e_1^{\top} P_1 e_1$; $R_1 \in \mathbb{R}^{6 \times 6}$ and $P_1 \in \mathbb{R}^{(2n_1) \times (2n_1)}$ are symmetric and positive definite controller gain matrices to be appropriately tuned; $Q_1 \in \mathbb{R}^{(2n_1) \times (2n_1)}$ is a symmetric and positive semi-definite controller gain matrix to be appropriately tuned.
\noindent The \emph{terminal set} $\mathcal{F}_1 \subseteq \mathcal{E}_1(\cdot)$ is chosen as: $$\mathcal{F}_1([q_\ell]_{\ell\in\{2,\dots,N\}}) = \{e_1 \in \mathcal{E}_1([q_\ell]_{\ell\in\{2,\dots,N\}}): V_1(e_1) \le \epsilon_1 \},$$ where $\epsilon_1 \in \mathbb{R}_{> 0}$ is an arbitrarily small constant to be appropriately tuned. The terminal set is used to enforce the stability of the closed-loop system. 

The solution to FHOCP \eqref{eq:mpc_minimazation} - \eqref{eq:mpc_terminal_set} at time $t_j$ provides an optimal control input, denoted by
$\hat{u}_1^{\star}(s;\ e_1(x_1(t_j)),x_1(t_j))$, $s \in [t_j, t_j + T_p]$. This control input is then applied to the system until the next sampling instant $t_{j+1}$:
\begin{align}
u_1\left(s; \ x_1(t_j), e_1(x_1(t_j))\right) = \hat{u}_1^{\star}\left(s; \ x_1(t_j), e_1(x_1(t_j))\right),
\label{eq:optimal_input}
\end{align}
for every $s \in [t_j, t_{j}+h)$. At time $t_{j+1} = t_{j}+h$ a new FHOCP is solved in the same manner, leading to a receding horizon approach. The control input $u_1(\cdot)$ is of feedback form, since it is recalculated at each sampling instant based on the then-current state. The solution of \eqref{eq:error_dynamics} at time $s$, $s \in [t_j, t_j+T_p]$, starting at time $t_j$, from an initial condition $x_1(t_j), e_1(x_1(t_j))$, by application of the control input $u_1 : [t_j, s] \to U_{1,u}$ is denoted by: $$e_1\big(x_1(s); \ u_1(\cdot); \ x_1(t_j), e_1(x_1(t_j)
)\big), s \in [t_j, t_j+T_p].$$ The \textit{predicted} state of the system \eqref{eq:diff_mpc} at time $s$, $s \in [t_j, t_j+T_p]$ based on the measurement of the state at time $t_j$, $x_1(t_j)$, by application of the control input $u_1\big(t;\ x_1(t_j), e_1(x_1(t_j))\big)$ as in \eqref{eq:optimal_input}, is denoted by $\hat{x}_1\big(s;\ u_1(\cdot); \ x_1(t_j), e_1(x_1(t_j))\big)$, and the corresponding predicted error by: $$e_1(\hat{x}_1(\cdot); \ u_1(\cdot); \ x_1(t_j), e_1(x_1(t_j) )\big), s \in [t_j, t_j+T_p].$$

\noindent After the solution of the FHOCP and the calculation of the predicted states $$\hat{x}_1\big(s;\ u_1(\cdot), e_1(x_1(t_j)),x_1(t_j)\big), s \in [t_j, t_j+T_p],$$ at each time instant $t_j$, agent $1$ transmits the values $\hat{q}_1(s,\cdot)$, $\hat{\dot{q}}_1(s,\cdot)$ as well as $x_{\scr O_1}(\hat{q}_1(s,\cdot))$ and $v_{\scr O_1}(\hat{q}_1(s,\cdot),\hat{\dot{q}}_1(s,\cdot))$, as computed by \eqref{eq:coupled kinematics}, \eqref{eq:object-end-effector jacobian}, $\forall s \in [t_j, t_j+T_p]$ to the rest of the agents $\{2,\dots,N\}$. The rest of the agents then proceed as follows. Each agent $i\in\{2,\dots,N\}$, solves the following FHOCP:
\begin{subequations} \label{eq:mpc_followers}
		\begin{align}
		&\hspace{0mm}\min\limits_{\hat{u}_i(\cdot)} J_i(x_i(t_j)),\hat{u}_i(\cdot)) \label{eq:mpc_minimazation followers} \\
		&\hspace{0mm}\text{subject to:} \notag \\
		&\hspace{1mm} \dot{\hat{x}}_i = \widetilde{f}_i(x_i(s),u_i(s)),  \label{eq:diff_mpc followers} \\
		&\hspace{1mm} \hat{x}_i(s) \in X_i\Big([\hat{q}_\ell(t_j)]_{\ell\in\{i+1,\dots,N\}}\Big), \label{eq:mpc_constrained_set_1 followers} \\
		&\hspace{1mm} \hat{x}_i(s) \in X_i\Big([\hat{q}_\ell(s,\cdot)]_{j\in\{1,\dots,i-1\}}\Big), \label{eq:mpc_constrained_set_1 followers all s} \\
		&\hspace{1mm} \hat{x}_{\scr O_i}(\hat{q}_i(s)) = x_{\scr O_1}(\hat{q}_1(s;\cdot)), \label{eq:equality constr_1 followers} \\
		&\hspace{1mm} \hat{v}_{\scr O_i}(\hat{q}_i(s),\dot{q}_i(s)) = v_{\scr O_1}(\hat{q}_1(s;\cdot), \dot{\hat{q}}_1(s;\cdot)), \label{eq:equality constr_2 followers} \\
		&\hspace{1mm} (u_i(s),\dot{u}_i(s),x_i(s))\in U_i, s \in [t_j,t_j+T_p], \label{eq:mpc_constrained_set_2 followers} 
		\end{align}
	\end{subequations}
at every sampling time $t_j$. Note that, through the equality constraints \eqref{eq:equality constr_1 followers}, \eqref{eq:equality constr_2 followers}, the follower agents must comply with the trajectory  computed by the leader $\hat{q}_1(s,\cdot), \hat{\dot{q}}_1(s,\cdot)$. This can be problematic in the sense that this trajectory might drive the followers to collide with an obstacle or among each other, i.e., a solution to \eqref{eq:mpc_followers} might not exist. Resolution of such cases is not in the scope of this paper and constitutes part of future research. Here, we assume that there are no such cases: 
\begin{assumption} \label{ass:follower feasibility}
	The sets 
	$\{ (q,s)\in\mathbb{R}^n\times[t_j, t_j+T_p] :x_{\scr O_i}(q_i(s)) = x_{\scr O_1}(\hat{q}_1(s;\cdot)),v_{\scr O_i}(q_i(s),\dot{q}_i(s)) = v_{\scr O_1}(\hat{q}_1(s;\cdot), \hat{\dot{q}}_1(s;\cdot))\cap \mathcal{S}_{i,\scr O}\cap \widetilde{\mathcal{S}}_{i,A}([q_\ell(t_j)]_{\ell\in\{i+1,\dots,N\}} \cap \widetilde{\mathcal{S}}_{i,A}([q_\ell(s)]_{\ell\in\{1,\dots,i-1\}})\}$ are nonempty, $\forall i\in\{2,\dots,N\}$.
\end{assumption}

Next, similarly to the leader agent $i=1$, it calculates the predicted states $\hat{q}_i(s,\cdot),\hat{\dot{q}}_i(s,\cdot), s\in[t_j,t_j+T_p]$, which it transmits to the agents $\{i+1,\dots,N\}$. In that way, at each time instant $t_j$, each agent $i\in\{2,\dots,N\}$ measures the other agents' states (as stated in Assumption \ref{ass:sensing_assumption}), incorporates the constraint \eqref{eq:mpc_constrained_set_1 followers} for the agents $\{i+1,\dots,N\}$, receives the predicted states $\hat{q}_\ell(s,\cdot), \hat{\dot{q}}_\ell(s,\cdot)$ from the agents $\ell\in\{2,\dots,i-1\}$ and incorporates the collision avoidance constraint \eqref{eq:mpc_constrained_set_1 followers all s} for the entire horizon. Loosely speaking, we consider that each agent $i\in\mathcal{N}$ takes into account the first state of the next agents in priority ($q_\ell(t_j),\ell\in\{i+1,\dots,N\}$), as well as the transmitted predicted variables $\hat{q}_\ell(s,\cdot), \ell\in\{1,\dots,i-1\}$ of the previous agents in priority, for collision avoidance.
Intuitively, the leader agent executes the planning for the followed trajectory of the object's center of mass (through the solution of the FHOCP \eqref{eq:mpc_minimazation}-\eqref{eq:mpc_terminal_set}), the follower agents contribute in executing this trajectory through the load sharing coefficients $c_i$ (as indicated in the coupled model \eqref{eq:coupled dynamics 2}), and the agents low in priority are responsible for collision avoidance with the agents of higher priority.
Moreover, the aforementioned equality constraints \eqref{eq:equality constr_1 followers}, \eqref{eq:equality constr_2 followers} as well as the forward dynamics \eqref{eq:mpc_minimazation followers} guarantee the compliance of all the followers with the model \eqref{eq:coupled dynamics 2}. For the followers, the cost $J_i(x_i(t_j),\hat{u}_i(\cdot))$ can be selected as any function of $x_i,u_i$, $\forall i\in\{2,\dots,N\}$.

Therefore, given the constrained FHOCP \eqref{eq:mpc_minimazation followers}-\eqref{eq:mpc_constrained_set_2 followers}, the solution of problem lies in the capability of the leader agent to produce a state trajectory that guarantess  $x_{\scr O_1}(q_1(t)) \to x_\text{des}$, by solving the FHOCP \eqref{eq:mpc_minimazation}-\eqref{eq:mpc_terminal_set}, which is discussed in Theorem \ref{th:main theorem}.

\begin{definition}  \label{definition:admissible_input_with_disturbance}
	A control input $u_1 : [t_j, t_j + T_p] \to \mathbb{R}^m$ for a state $e_1(x_1(t_j))$ is called \textit{admissible} for the FHOCP \eqref{eq:mpc_minimazation}-\eqref{eq:mpc_terminal_set} if the following hold: 
	\begin{enumerate}
		\item $u_1(\cdot)$ is piecewise continuous;
		\item $u_1(s) \in U_{1,u}, \forall s \in [t_j, t_j + T_p]$;
		\item $e_1\big(x_1(s);\ u_1(\cdot); \ x_1(t_j), e_1(x_1(t_j))\big) \in \mathcal{E}_1(\cdot), \forall \ s \in [t_j, t_j+T_p]$, and 
		\item $e_1\big(x_1(t_j + T_p); \ u_1(\cdot); \ x_1(t_j), e_1(x_1(t_j))\big) \in \mathcal{F}_1(\cdot)$.
	\end{enumerate}
\end{definition}

\begin{theorem} \label{th:main theorem}
	Suppose that: 
	\begin{enumerate}
		\item Assumption \ref{ass:feasility_assumption} - \ref{ass:follower feasibility} hold;
		\item  The FHOCP \eqref{eq:mpc_minimazation}-\eqref{eq:mpc_terminal_set} is feasible for the initial time $t = 0$;
		\item There exists an admissible control input $\kappa_1 : [t_j + T_p, t_{j+1} + T_p] \to U_1$ such that for all $e_1 \in \mathcal{F}_1(\cdot)$ and for every $s \in [t_j + T_p, t_{j+1} + T_p]$ it holds that:
		\begin{enumerate}
			\item $e_1(x_1(s)) \in \mathcal{F}_1(\cdot)$, and
			\item $\dfrac{\partial V_1}{\partial e_1} g_1(e_1(x_1(s))$, $\kappa_1(s)) + F_1(e_1(x_1(s))$, $h_1(s)) \leq 0$.
		\end{enumerate}
	\end{enumerate}
Then, the system \eqref{eq:error_dynamics}, under the control input \eqref{eq:optimal_input}, converges to the set $\mathcal{F}_1(\cdot)$ when $t \to \infty$.
\end{theorem}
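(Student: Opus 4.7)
The plan is to follow the standard NMPC convergence paradigm tailored to this leader--follower setting: establish \textbf{(i)} recursive feasibility of the FHOCP \eqref{eq:mpc_minimazation}--\eqref{eq:mpc_terminal_set} across sampling instants, \textbf{(ii)} a strict monotonic decrease of the optimal value function $J_1^\star(e_1(x_1(t_j)))$, and \textbf{(iii)} asymptotic convergence via Lemma \ref{lemma:barbalat}. Hypothesis 2 gives feasibility at $t_0=0$; the inductive step from $t_j$ to $t_{j+1}$ is the crux and is handled via the usual shifted-tail construction glued to the local admissible controller $\kappa_1$ provided by Hypothesis 3.

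For \textbf{recursive feasibility}, assume the FHOCP is feasible at $t_j$, with optimal input $\hat u_1^\star(s;\cdot)$ defined on $[t_j,t_j+T_p]$. Define the candidate input at $t_{j+1}=t_j+h$:
\begin{equation*}
	\tilde u_1(s) =
	\begin{cases}
		\hat u_1^\star(s;\cdot), & s \in [t_{j+1},t_j+T_p],\\
		\kappa_1(s), & s \in [t_j+T_p, t_{j+1}+T_p].
	\end{cases}
\end{equation*}
On the first piece, \eqref{eq:mpc_constrained_set_1}--\eqref{eq:mpc_constrained_set_2} hold by optimality at $t_j$; one needs to check that the state constraint \eqref{eq:mpc_constrained_set_1} with the updated set $\mathcal{E}_1([q_\ell(t_{j+1})]_{\ell\in\{2,\dots,N\}})$ is still satisfied, which follows from Assumption \ref{ass:sensing_assumption}, Assumption \ref{ass:follower feasibility} and the fact that the followers' FHOCPs \eqref{eq:mpc_followers} enforce \eqref{eq:mpc_constrained_set_1 followers all s} against the leader's already-transmitted prediction. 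On the second piece, Hypothesis 3(a) guarantees that the extended state remains inside $\mathcal{F}_1(\cdot)\subseteq \mathcal{E}_1(\cdot)$ and $\kappa_1\in U_1$, so the terminal constraint \eqref{eq:mpc_terminal_set} is preserved. Hence $\tilde u_1$ is admissible in the sense of Definition \ref{definition:admissible_input_with_disturbance}.

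For the \textbf{cost decrease}, evaluate $J_1$ along $\tilde u_1$ and compare with $J_1^\star$ at $t_j$. Splitting the integral at $t_j+T_p$ and using Hypothesis 3(b), which implies $\frac{d}{ds}V_1(e_1(\hat x_1(s))) + F_1(e_1(\hat x_1(s)),\kappa_1(s)) \le 0$ on $[t_j+T_p,t_{j+1}+T_p]$, one obtains by integration
\begin{equation*}
	\int_{t_j+T_p}^{t_{j+1}+T_p}\!\! F_1\,ds + V_1(e_1(\hat x_1(t_{j+1}+T_p))) - V_1(e_1(\hat x_1(t_j+T_p))) \le 0.
\end{equation*}
Combining with optimality at $t_{j+1}$ (i.e. $J_1^\star(e_1(x_1(t_{j+1})))\le J_1(\tilde u_1)$) yields
\begin{equation*}
	J_1^\star(e_1(x_1(t_{j+1}))) \le J_1^\star(e_1(x_1(t_j))) - \int_{t_j}^{t_{j+1}} F_1(e_1(\hat x_1(s)),\hat u_1^\star(s))\,ds.
\end{equation*}
Since $F_1\ge 0$ and $J_1^\star\ge 0$, the sequence $\{J_1^\star(e_1(x_1(t_j)))\}_{j\in\mathbb{N}}$ is non-increasing and bounded below, hence convergent, and summing telescopically gives $\int_0^{\infty} F_1\,ds <\infty$.

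Finally, for \textbf{convergence}, the state and input constraints \eqref{eq:mpc_constrained_set_1}--\eqref{eq:mpc_constrained_set_2} together with the Lipschitz continuity of $g_1$ (noted in the remark after \eqref{eq:error_dynamics}) guarantee that $e_1$ and $\dot e_1$ remain uniformly bounded along the closed-loop trajectory. Since $F_1(e_1,u_1)=e_1^\top Q_1 e_1 + u_1^\top R_1 u_1$ is continuous and the quadratic form in $e_1$ is positive semi-definite with $R_1\succ 0$, an application of Lemma \ref{lemma:barbalat} to $x(\cdot)=e_1(\cdot)$ and $f=e_1^\top Q_1 e_1$ (plus continuity of $V_1$) yields that $V_1(e_1(x_1(t)))$ decays below $\epsilon_1$ in finite time and stays there, i.e. $e_1$ enters and remains in $\mathcal{F}_1(\cdot)$ as $t\to\infty$. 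I expect the main technical obstacle to be the feasibility step, specifically verifying that the state-constraint set $\mathcal{E}_1([q_\ell(t_{j+1})]_{\ell\in\{2,\dots,N\}})$ --- which drifts in time as the followers move --- still contains the shifted tail of the previous optimal trajectory; this is precisely where Assumption \ref{ass:follower feasibility} and the hierarchical priority-based transmission of predicted states described around \eqref{eq:mpc_constrained_set_1 followers all s} are essential.
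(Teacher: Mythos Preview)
Your proposal is correct and follows essentially the same approach as the paper: the paper's own proof merely states the two-part structure (recursive feasibility via initial feasibility, then convergence of $e_1$ to $\mathcal{F}_1(\cdot)$) and refers to the analogous Theorem~1 of \cite{alex_chris_med_2017} for the details, which is precisely the standard shifted-tail-plus-local-controller argument you spell out. In fact, you provide considerably more detail than the paper does, including the explicit cost-decrease estimate and the invocation of Lemma~\ref{lemma:barbalat}; your identification of the time-varying constraint set $\mathcal{E}_1([q_\ell(t_{j+1})]_{\ell})$ as the delicate point in the feasibility step is also apt, and is handled in the paper only implicitly through Assumption~\ref{ass:follower feasibility}.
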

\begin{proof}
	The proof of the theorem consists of two parts: firstly, recursive feasibility is established, that is, initial feasibility is shown to imply subsequent feasibility; secondly, and based on the first part, it is shown that the error state $e_1(t)$ reaches the terminal set $\mathcal{F}_1(\cdot)$. The feasibility analysis and the convergence analysis is similar with the proof of Theorem $1$ in \cite[Section IV]{alex_chris_med_2017}.
\end{proof}

\section{Simulation Results} \label{sec:simulation_results}

To demonstrate the efficiency of the proposed control protocol, we consider a simulation example with $N=3$ ground vehicles equipped with $2$ DOF manipulators, rigidly grasping an object with $n_1 = n_2 = n_3 = 4$, $n = n_1+n_2+n_3= 12$. The states of the agents are given as: $q_i = [p_{\scriptscriptstyle B_i}^\top, \alpha_i^\top]^\top \in \mathbb{R}^4$, $p_{\scriptscriptstyle B_i} = [x_{\scriptscriptstyle B_i}, y_{\scriptscriptstyle B_i}]^\top \in \mathbb{R}^2$, $\alpha_i = [\alpha_{i_1}$, $\alpha_{i_2}]^\top \in \mathbb{R}^2$, $i \in \{1,2,3\}$. The state of the object is $x_{\scriptscriptstyle O} = [p_{\scriptscriptstyle O}^\top, \phi_{\scriptscriptstyle O}]^\top \in \mathbb{R}^4$ and it is calculated though the states of the agents. The manipulators become singular when $\sin(\alpha_{i_1}) = 0 \}, i \in \{1,2\}$, thus the state constraints for the manipulators are set to: $\varepsilon < \alpha_{1_1} < \frac{\pi}{2}-\varepsilon$, $-\frac{\pi}{2}+\varepsilon < \alpha_{1_2} < \frac{\pi}{2}-\varepsilon$, $-\frac{\pi}{2} + \varepsilon < \alpha_{2_1} < -\varepsilon$, $-\frac{\pi}{2}+\varepsilon < \alpha_{2_2} < \frac{\pi}{2}-\varepsilon$. We also consider the input constraints: $-8.5 \le u_{i,j}(t) \le 8.5$, $i \in \{1,2\}$, $j \in \{1, \dots, 4\}$. The initial conditions of agents and the object are set to: $q_{1}(0) = [0.5, 0, \frac{\pi}{4}, \frac{\pi}{4}]^\top$, $q_{2}(0) = [0, -4.4142, -\frac{\pi}{4}, -\frac{\pi}{4}]^\top$, $q_{3}(0) = [-0.50, -4.4142, -\frac{\pi}{4}, -\frac{\pi}{4}]^\top$, $\dot{q}_1(0) = \dot{q}_2(0) = \dot{q}_3(0) = [0, 0, 0, 0]^\top$ and $x_{\scriptscriptstyle O}(0) = [0, -2.2071, 0.9071, \frac{\pi}{2}]^\top$. The desired goal state the object is set to $x_{\scriptscriptstyle O, \text{des}} = [5, -2.2071, 0.9071, \frac{\pi}{2}]^\top$, which, due to the structure of the considered robots, corresponding uniquely to $q_{1, \text{des}} = [5.5, 0, \frac{\pi}{4}, \frac{\pi}{4}]^\top$, $q_{2, \text{des}} = [5, -4.4142, -\frac{\pi}{4}, -\frac{\pi}{4}]^\top$, $q_{3, \text{des}} = [4.5, 0, -\frac{\pi}{4}, -\frac{\pi}{4}]^\top$, $\dot{q}_{3, \text{des}} = [0, 0, 0, 0]^\top$ and $\dot{q}_{1, \text{des}} = \dot{q}_{2, \text{des}} = \dot{q}_{3, \text{des}} =  [0, 0, 0, 0]^\top$. We set an obstacle between the initial and the desired pose of the object. The obstacle is spherical with center $[2.5,-2.2071,1]$ and radius $\sqrt{0.2}$. The sampling time is $h = 0.1 \sec$, the horizon is $T_p = 0.5 \sec$, and the total simulation time is $60 \sec$; The matrices $P$, $Q$, $R$ are set to: $P = Q = 0.5 I_{8 \times 8}$, $R = 0.5 I_{4 \times 4}$ and the load sharing coefficients as $c_1 = 0.3$, $c_2 = 0.5$, and $c_3 = 0.2$. The simulation results are depicted in Fig. \ref{fig:error_ag_1}- Fig. \ref{fig:obstacle_function}; Fig. \ref{fig:error_ag_1}, Fig. \ref{fig:errors_ag_2} and Fig. \ref{fig:errors_ag_3} show the error states of agent $1$, $2$ and $3$, respectively, which converge to $0$; Fig. \ref{fig:states_object} depicts the states of the objects; Fig. \ref{fig:obstacle_function} shows the collision-avoidance constraint with the obstacle; Fig. \ref{fig:control_inputs_ag_1} - Fig. \ref{fig:control_inputs_ag_2} depict the control inputs of the three agents. Note that the different load-sharing coefficients produce slightly different inputs. The simulation was carried out by using the NMPC toolbox given in \cite{grune_2011_nonlinear_mpc} and it took $13450 \sec$ in MATLAB $R2015$a Environment on a desktop with $8$ cores, $3.60$ GHz CPU and $16$GB of RAM. In our previous work \cite{alex_chris_med_2017}, the same simulation was \emph{centralized} and it took $45547 \sec$ on the same computer.
\begin{figure}[t!]
	\vspace{2mm}
	\centering
	\includegraphics[scale = 0.50]{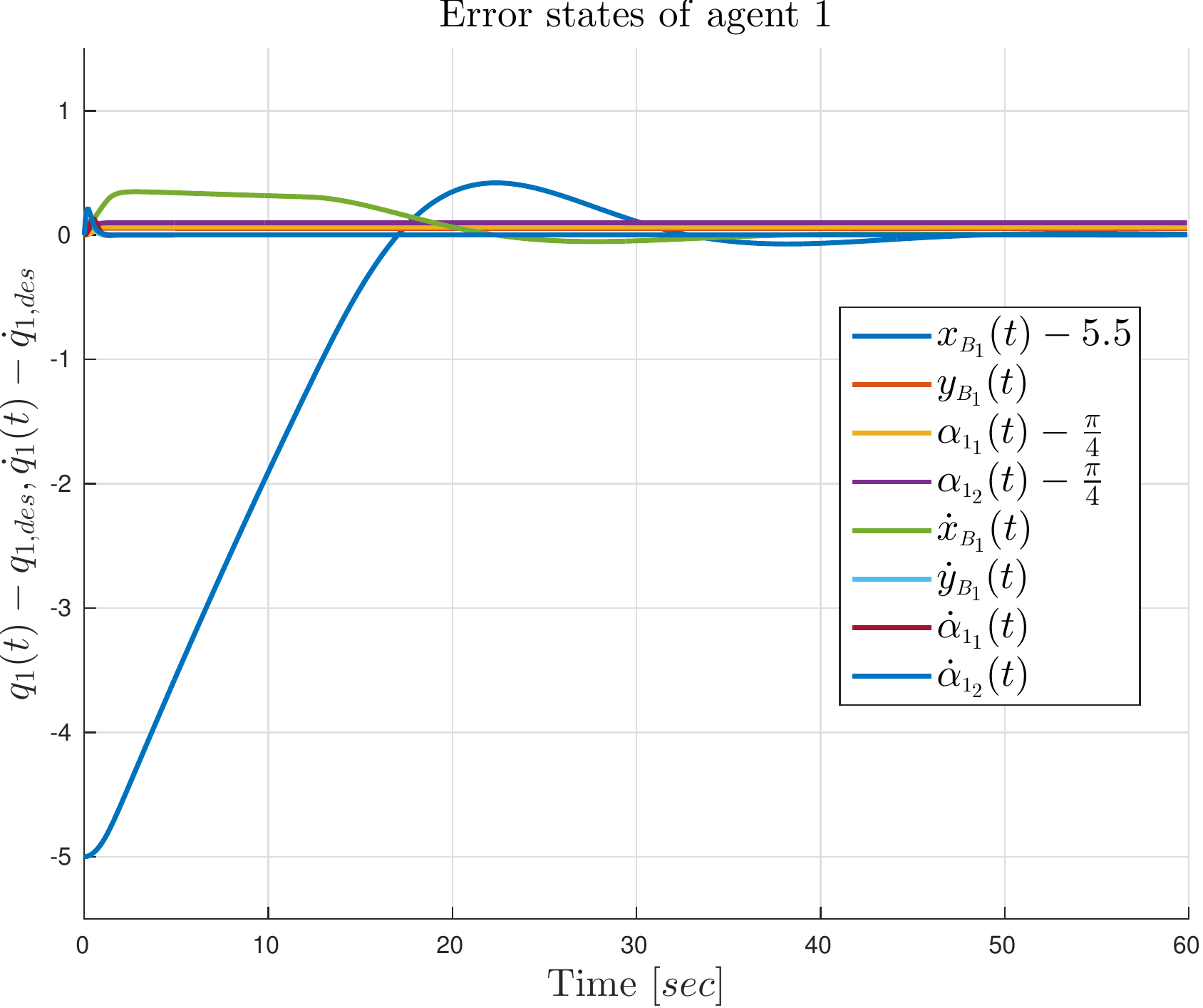}
	\caption{The error states of agent $1$.}
	\label{fig:error_ag_1}
\end{figure}
\begin{figure}[t!]
	\centering
	\includegraphics[scale = 0.50]{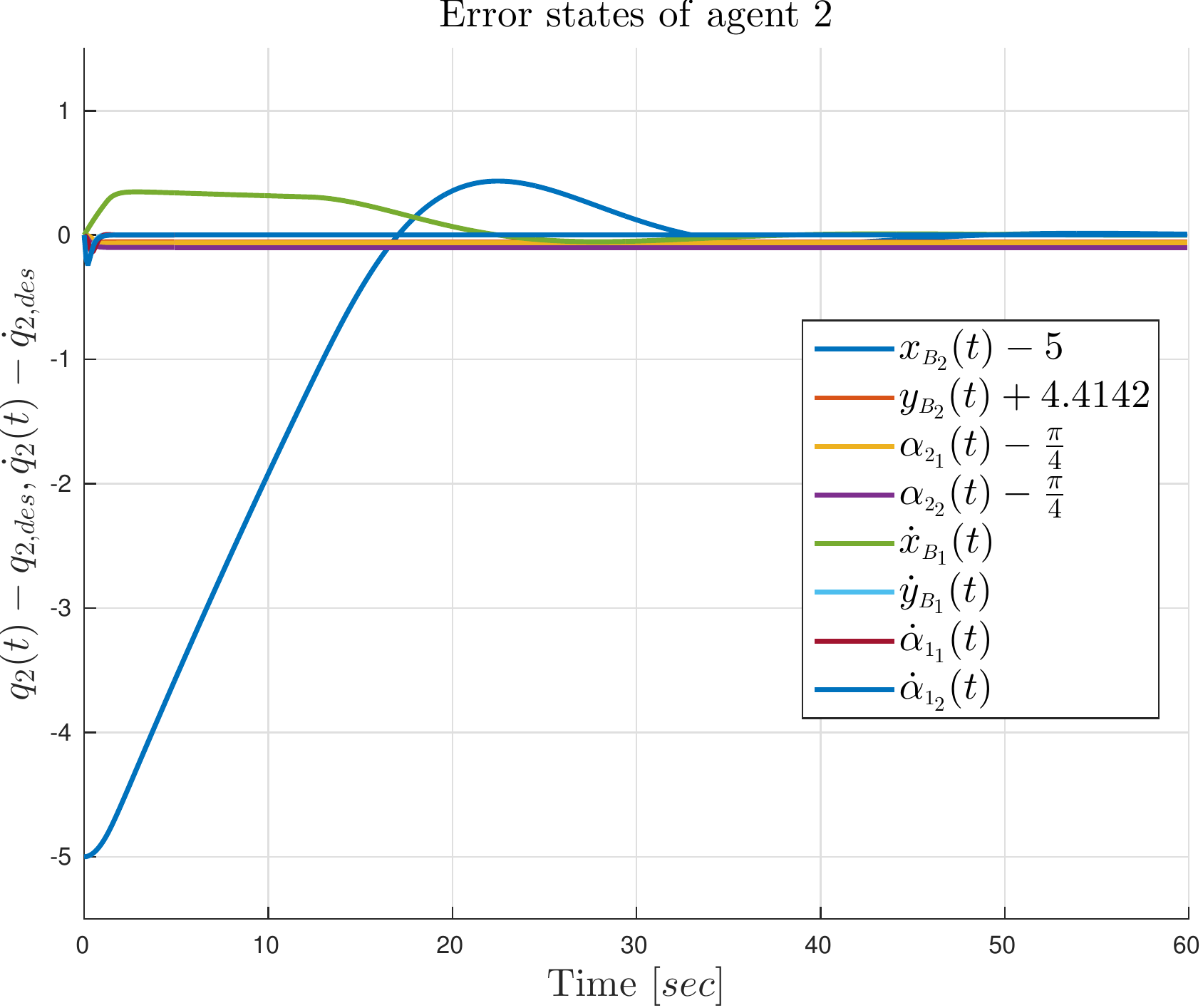}
	\caption{The error states of agent $2$.}
	\label{fig:errors_ag_2}
\end{figure}

\begin{figure}[t!]
	\vspace{2mm}
	\centering
	\includegraphics[scale = 0.50]{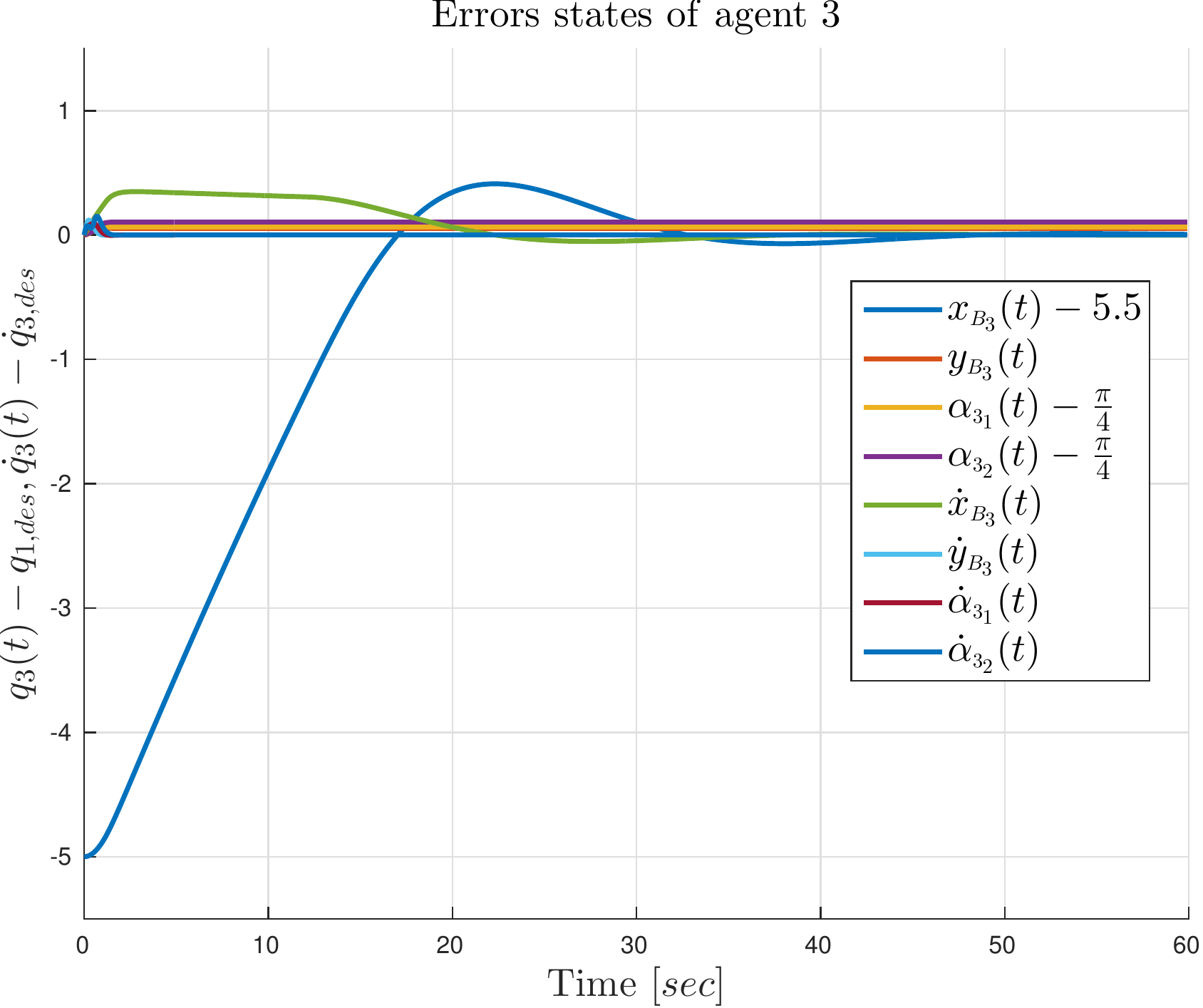}
	\caption{The error states of agent $3$.}
	\label{fig:errors_ag_3}
\end{figure}

\begin{figure}[t!]
	\centering
	\includegraphics[scale = 0.50]{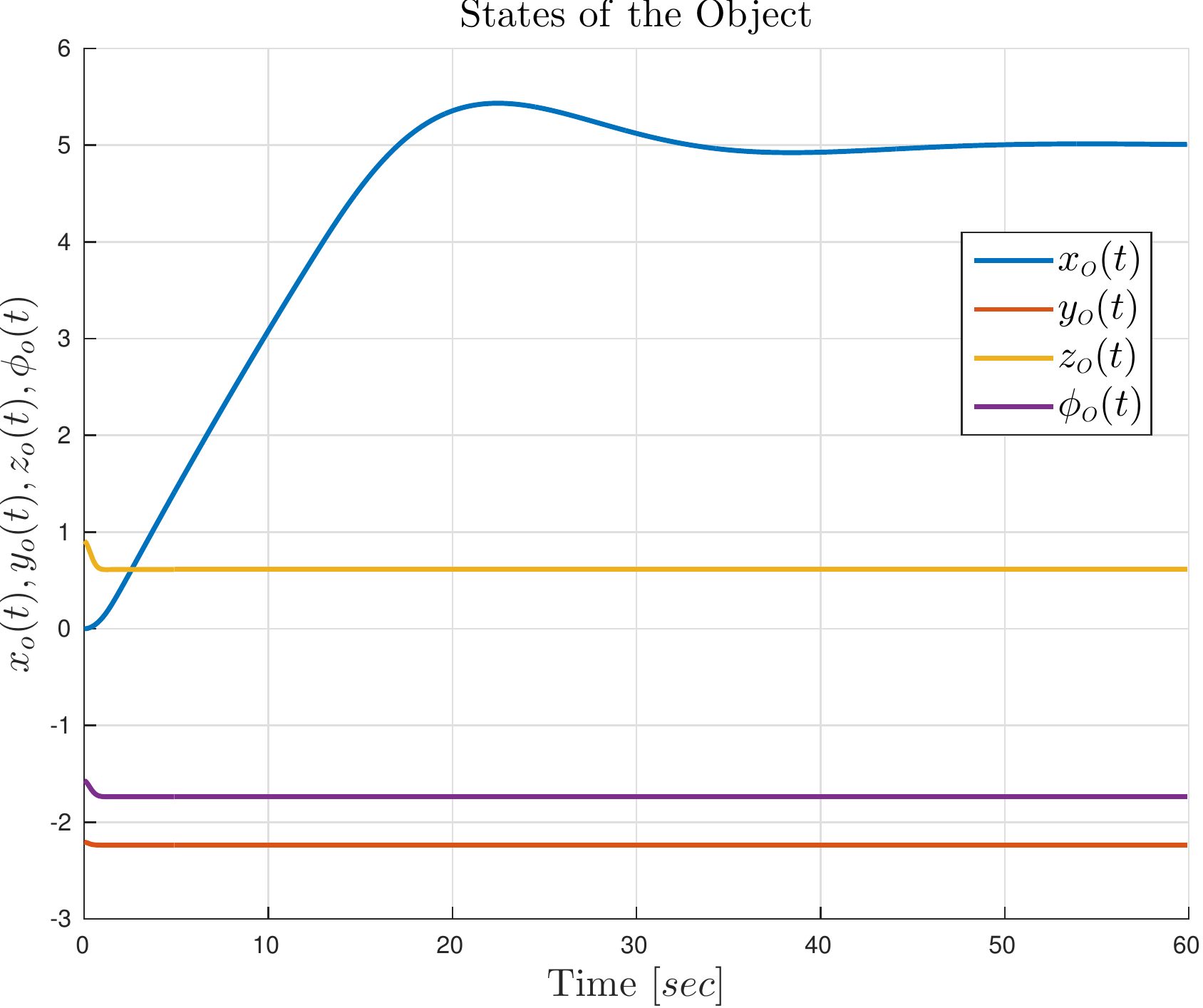}
	\caption{The states of object converging to the desired ones.}
	\label{fig:states_object}
\end{figure}

\begin{figure}[t!]
	\centering
	\includegraphics[scale = 0.50]{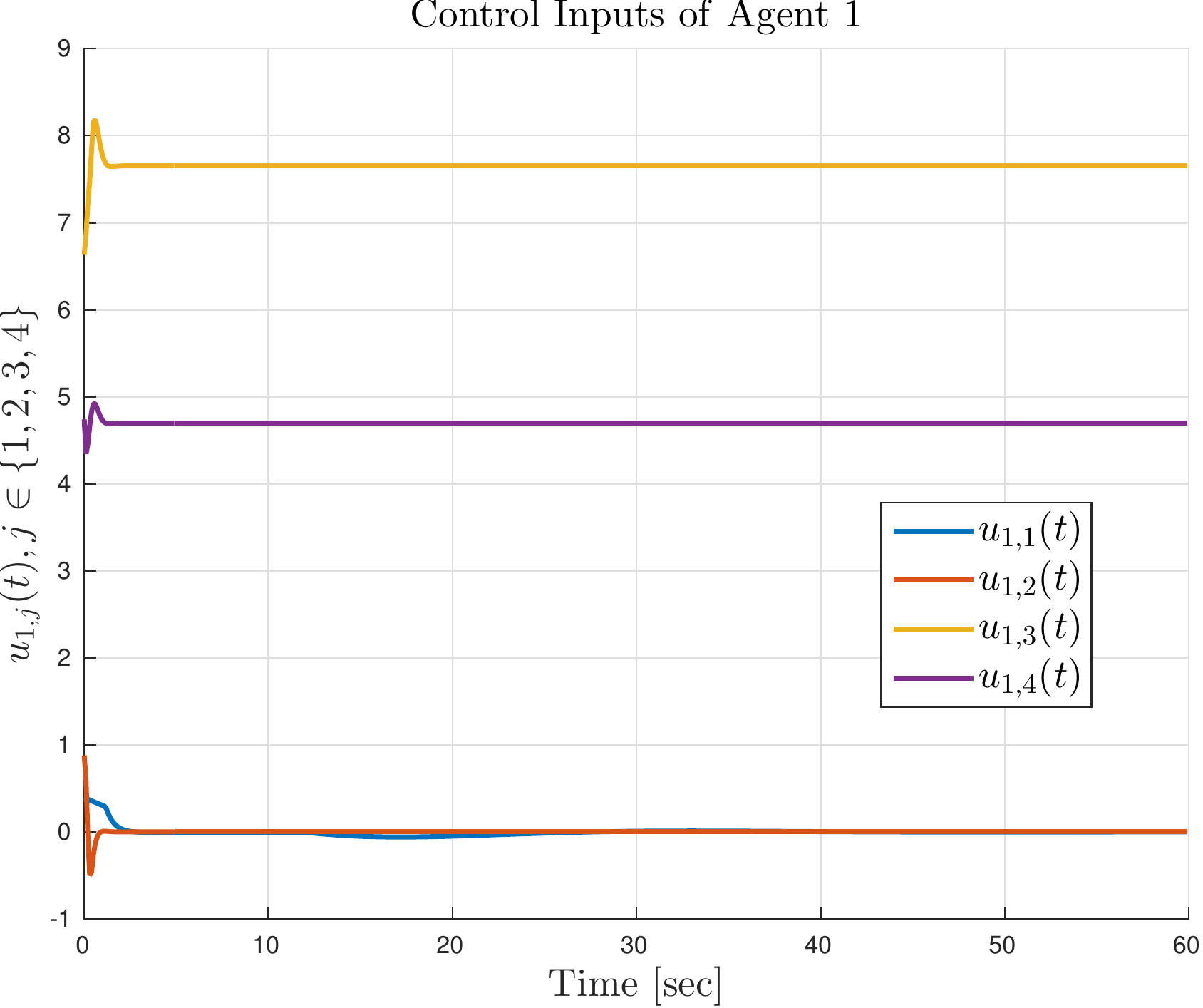}
	\caption{The control inputs of agent $1$ with $-8.5 \le u_{1,j}(t) \le 8.5$.}
	\label{fig:control_inputs_ag_1}
\end{figure}

\begin{figure}[t!]
	\centering
	\includegraphics[scale = 0.50]{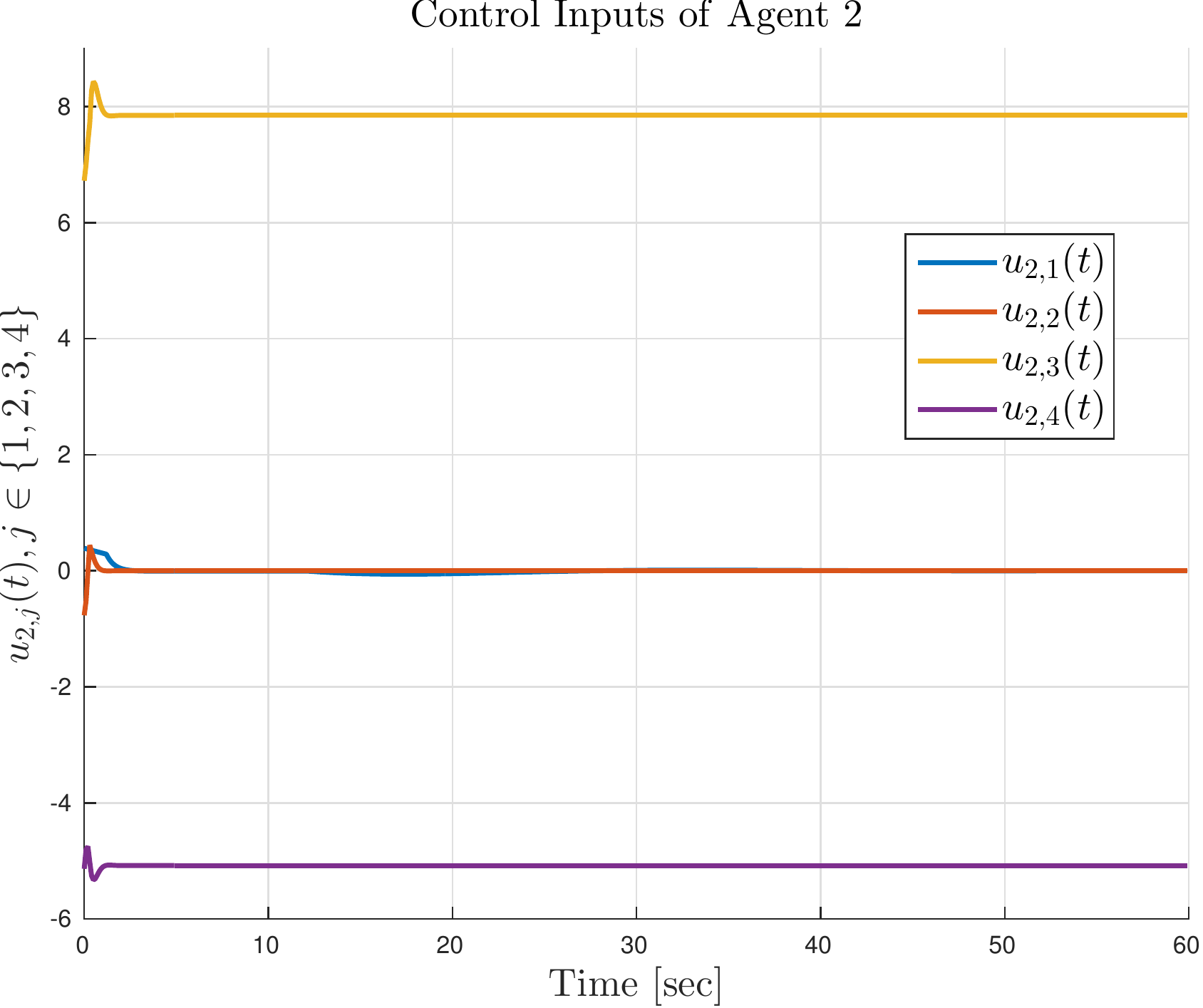}
	\caption{The control inputs of agent $2$ with $-8.5 \le u_{2,j}(t) \le 8.5$.}
	\label{fig:control_inputs_ag_3}
\end{figure}

\begin{figure}[t!]
	\centering
	\includegraphics[scale = 0.50]{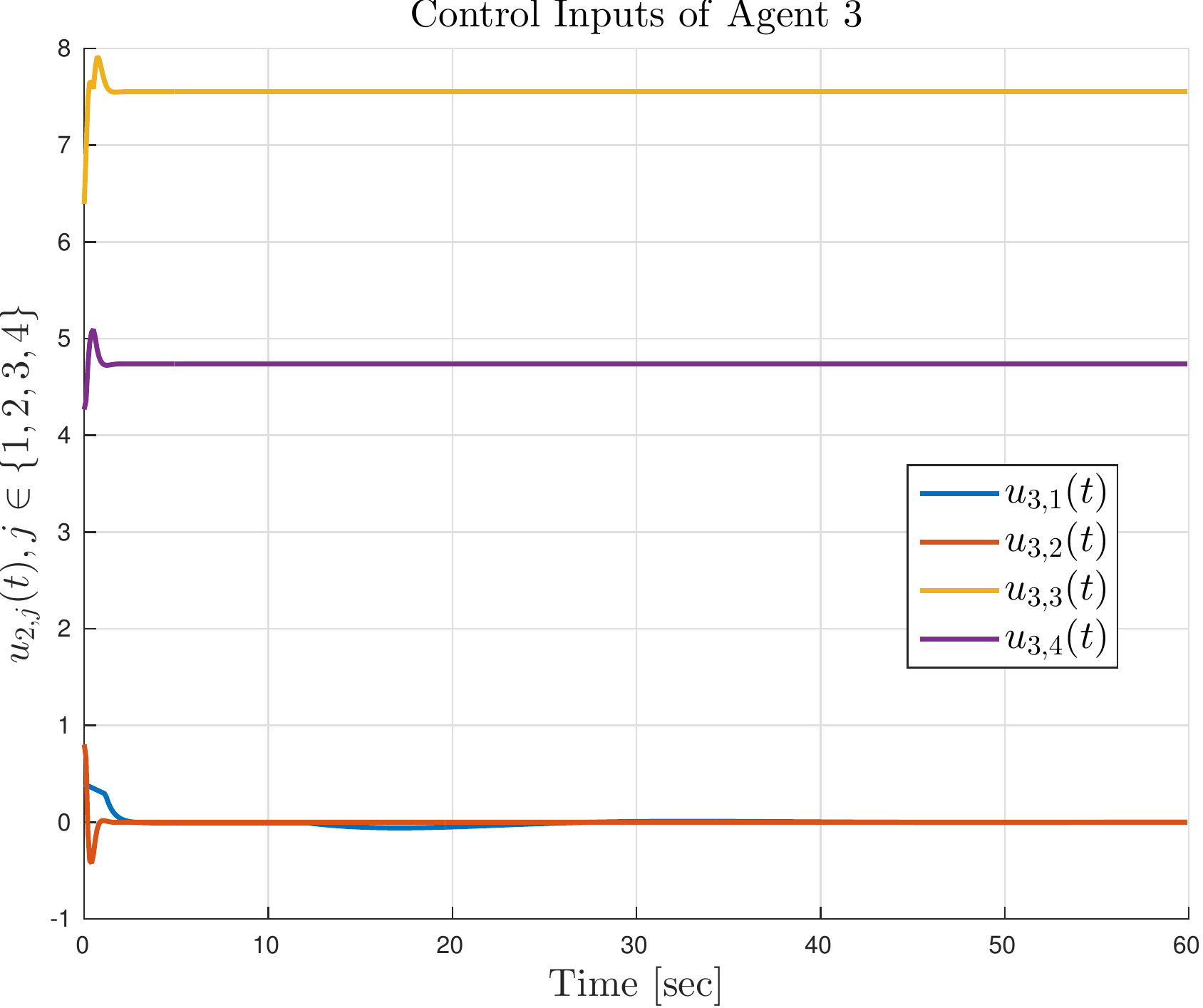}
	\caption{The control inputs of agent $3$ with $-8.5 \le u_{3,j}(t) \le 8.5$.}
	\label{fig:control_inputs_ag_2}
\end{figure}

\begin{figure}[t!]
	\vspace{2mm}
	\centering
	\includegraphics[scale = 0.50]{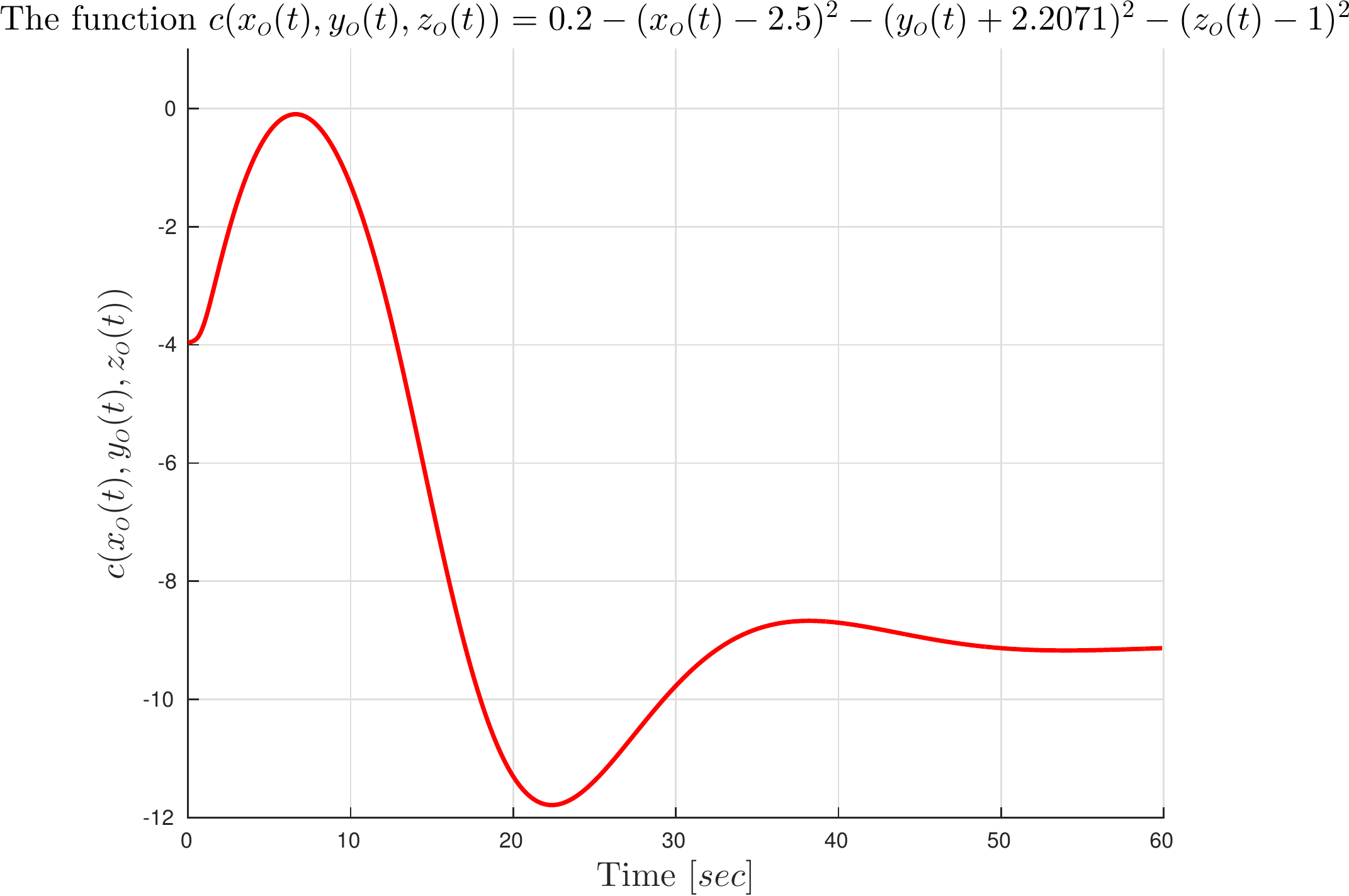}
	\caption{The function $c(x_{\scriptscriptstyle O}(t)$, $y_{\scriptscriptstyle O}(t)$, $z_{\scriptscriptstyle O}(t)) = 0.2-(x_{\scriptscriptstyle O}-2.5)^2-(y_{\scriptscriptstyle O}+2.2071)^2-(z_{\scriptscriptstyle O}-1)^2$ is always negative, indicating collision avoidance.}
	\label{fig:obstacle_function}
\end{figure}

\section{Conclusions and Future Work} \label{sec:conclusions}

In this work we proposed a NMPC scheme for decentralized cooperative transportation of an object rigidly grasped by $N$ robotic agents. The proposed control scheme deals with singularities of the agents, inter-agent collision avoidance as well as collision avoidance between the agents and the object with the workspace obstacles. We proved the feasibility and  convergence analysis of the proposed methodology and simulation results verified the efficiency of the approach. Future efforts will be devoted towards reconfiguration in case of task infeasibility for the followers,  event-triggered communication between the agents so as to reduce the communication burden that is required for solving the FHOCP at every sampling time, and real-time experiments.

\bibliography{references}
\bibliographystyle{ieeetr}
\end{document}